\let\vec\relax 
\DeclareMathAccent{\vec}{\mathord}{letters}{"7E} 
\newcolumntype{C}{>{\Centering\arraybackslash}X} 
\theoremstyle{definition}
\newtheorem{defn}{Definition}
\DeclareMathOperator*{\argmax}{\arg\max}
\newcommand{\sety}{\hat{Y}}
\newcommand{\bx}{\boldsymbol{x}}
\newcommand{\bw}{\boldsymbol{w}}
\newcommand{\calY}{\mathcal{Y}}
\newcommand{\calQ}{\mathcal{Q}}
\definecolor{dgreen}{rgb}{0.0, 0.5, 0.0}
\newcommand{\ubopmnisttable}{
\adjustbox{width=0.55\columnwidth,valign=B,raise=1\baselineskip}{%
    \renewcommand{\arraystretch}{1.5}%
    \begin{tabular}{l|ccc}
         & \includegraphics[width=10mm,valign=c]{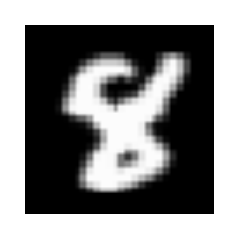} & \includegraphics[width=10mm,valign=c]{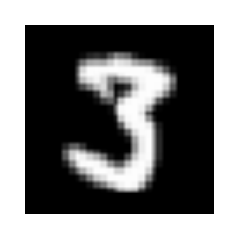} & \includegraphics[width=10mm,valign=c]{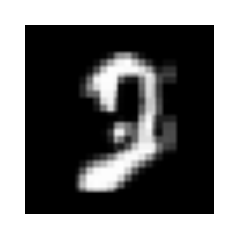} \\
         \hline
         $\beta = 5$ & $\{\underline{8}\}$ & $\{\underline{3},5,9\}$ & $\{9,3,\underline{2},7,8,1\}$  \\
         $\beta = 2$ & $\{\underline{8}\}$ & $\{\underline{3}\}$ & $\{9,3,\underline{2}\}$  \\
         $\beta = 1$ & $\{\underline{8}\}$ & $\{\underline{3}\}$ & $\{9,3\}$  \\
         \hline
         & $H = 10^{-5}$ & $H = 0.56$ & $H = 2.54$ \\
    \end{tabular}
    }
}
\newcommand{\ubopdbpediatable}{
\adjustbox{width=\columnwidth,valign=B,raise=1\baselineskip}{%
    \renewcommand{\arraystretch}{1.5}%
  \centering
  \setlength\tabcolsep{1pt}
	\begin{tabularx}{\textwidth}{CCC}
	\hline
	\multicolumn{1}{|C|}{\cellcolor{gray!15}\texttt{Lance Ten Broeck (born March 21, 1956) is an American professional golfer who has played on the PGA Tour, Nationwide Tour, and Champions Tour. Ten Broeck was born in Chicago, Illinois, and grew-up in Beverly, a community on the city's southwest side...}}  & \multicolumn{1}{C|}{\cellcolor{gray!15}\texttt{MTS TV is a digital television service owned by Telekom Srbija. The service provides thematic channels, HD channels, video on demand, video recording, the use of an Electronic Program Guide (EPG) and other services...}} & \multicolumn{1}{C|}{\cellcolor{gray!15}\texttt{Liao Bingxiong was a Chinese political cartoonist, painter and calligrapher. He remained active from 1934 until he gave up in 1995 (with a 20-year break between 1957 and 1978). Liao is widely regarded as one of China's foremost political cartoonists...}}  \\	
	\hline
	\scriptsize\it\{\textbf{GolfTournament, \underline{GolfPlayer}}, RugbyPlayer, SoccerPlayer, CyclingRace\} & \scriptsize\it\{\textbf{TelevisionStation, \underline{BroadcastNetwork}}, BusCompany, Comedian, RailwayLine\} & \scriptsize\it\{\textbf{Painter, \underline{ComicsCreator}}, PoliticalParty, President, Philosopher\}
	\end{tabularx}
    }
}
\renewcommand{\vec}[1]{\mathbf{#1}} 
\begin{document}

\title{Efficient Set-Valued Prediction in Multi-Class Classification
}


\author{Thomas Mortier          \and
        Marek Wydmuch           \and
        Krzysztof Dembczy\'nski \and
        Eyke H\"ullermeier      \and
        Willem Waegeman         
}


\institute{
            T. Mortier and W. Waegeman \at
              Coupure links 653, 9000 Ghent, Belgium \\
              Tel.: + 32 9 264 59 32\\
              \email{\{thomasf.mortier,willem.waegeman\}@ugent.be}           
           \and
            M. Wydmuch and K. Dembczy\'nski \at
              Piotrowo 2, 60-965 Pozna\'n, Poland \\
              \email{\{mwydmuch,kdembczynski\}@cs.put.poznan.pl}           
           \and
            E. H\"ullermeier \at
              Pohlweg 51, 33098 Paderborn, Germany \\
              \email{eyke@upb.de}           
}

\date{Received: date / Accepted: date}

\maketitle

\begin{abstract}
In cases of uncertainty, a multi-class classifier preferably returns a set of candidate classes instead of predicting a single class label with little guarantee. More precisely, the classifier should strive for an optimal balance between the correctness (the true class is among the candidates) and the precision (the candidates are not too many) of its prediction. We formalize this problem within a general decision-theoretic framework that unifies most of the existing work in this area. In this framework, uncertainty is quantified in terms of conditional class probabilities, and the quality of a predicted set is measured in terms of a utility function. We then address the problem of finding the Bayes-optimal prediction, i.e., the subset of class labels with highest expected utility. For this problem, which is computationally challenging as there are exponentially (in the number of classes) many predictions to choose from, we propose efficient algorithms that can be applied to a broad family of utility functions. Our theoretical results are complemented by experimental studies, in which we analyze the proposed algorithms in terms of predictive accuracy and runtime efficiency. 

\keywords{Set-valued prediction \and Multi-class classification \and Expected utility maximization}
\end{abstract}

\section{Introduction}
\label{sec:intro}
In probabilistic multi-class classification, one often encounters situations in which the classifier is uncertain about the class label for a given instance. 
In such cases, instead of predicting a single class, it might be beneficial to return a set of classes as a prediction, with the idea that the correct class should at least be contained in that set.  For example, in medical diagnosis, when not being sure enough about the true disease of a patient, it is better to return a set of candidate diseases. Provided this set is sufficiently small compared to the total number of diagnoses, it can still be of great help for a medical doctor, because only the remaining candidate diseases need further investigation. 

Let us introduce set-valued prediction in a formal way. We assume training examples $\{(\bx_{i},y_{i})\}_{i=1}^N$ from a distribution $P(\bx,y)$ on $\mathcal{X}\times\mathcal{Y}$, with $
\mathcal{X}$ some instance space (e.g., images, documents, etc.) and $\mathcal{Y}=\{c_1,\ldots,c_K\}$ a class space consisting of $K$
classes. In a probabilistic multi-class classification setting, we estimate the conditional class probabilities $P( \cdot \,|\,\bx)$ over $\mathcal{Y}$, with properties $
\forall c \in \mathcal{Y}: 0 \leq P(c\,|\,\bx) \leq 1 \,, \sum_{c \in \mathcal{Y}} P(c\,|\, \bx) = 1 \,.$ This distribution can be estimated using a wide range of well-known probabilistic methods (see further below). We will consider a set-valued prediction $\sety$ from the power set of $\mathcal{Y}$, i.e., predictions are (non-empty) subsets of $\mathcal{Y}$, or more formally, $\sety\in 2^{\mathcal{Y}}\setminus \{\emptyset\}$.   

In the literature, different methods for set-valued prediction have been proposed (cf.\ Section \ref{sec:relwork}), essentially following two main directions. The first idea is to construct a set that covers the true outcome with a predefined (high) probability. A set-valued prediction of that kind can be seen as a generalization of the notion of confidence interval in frequentist statistics or credible interval in Bayesian statistics. A well-known representative of this statistical approach is conformal prediction \citep{Shafer2008}. The second direction is rooted in (Bayesian) decision theory and involves the notion of a \emph{utility function}, which evaluates a set-valued prediction in terms of its usefulness \citep{Delcoz2009LearningNC,Corani2008NCC,Corani2009LNCC,Zaffalon2012EvaluatingCC,Yang2017b}. Typically, the utility specifies a compromise between two natural though conflicting criteria: like in the statistical approach, the prediction should be \emph{correct} in the sense of covering the true class, but at the same time, it should be \emph{precise} and not contain too many options. Given a utility function of that kind, combined with a probability estimate on the classes, the natural decision-theoretic approach consists of predicting the set with highest expected utility. In this paper, we will focus on this approach, which we refer to as the \emph{set-based utility maximization framework}. 
 

\subsection{Set-based utility maximization}
 In set-based utility maximization, the quality of the prediction $\sety$ can be expressed by means of a set-based utility function $u(c,\sety)$, where $c$ corresponds to the ground-truth class and $\sety$ is the predicted set.  Typically, a decision-theoretic framework is considered, where one estimates a probabilistic model, followed by an inference procedure at prediction time.   At prediction time, the goal is to find the Bayes-optimal solution $\sety^{*}_u$ by expected utility maximization:
\begin{align}
\label{eq:bayesoptimal}
\sety^{*}_u(\bx) &= \argmax_{\sety \in 2^{\mathcal{Y}}\setminus \{\emptyset\}} \mathbb{E}_{P(c\,|\,\bx)} [u(c,\sety)]  = \argmax_{\sety \in 2^{\mathcal{Y}}\setminus \{\emptyset\}} \sum_{c \in \mathcal{Y}} u(c,\sety) P(c\,|\,\bx)\,, \nonumber \\
& =  \argmax_{\sety \in 2^{\mathcal{Y}}\setminus \{\emptyset\}} U(\sety,P,u) \,, 
\end{align}
where we introduce the shorthand notation $U(\sety,P,u)$ for the expected utility.  

Several authors have solved this optimization problem for different utility functions that are members of a general family $u: \mathcal{Y} \times 2^{\mathcal{Y}}\setminus \{\emptyset\} \rightarrow [0,1]$, defined as follows:
\begin{equation}
\label{eq:ufamily}
u(c,\sety) = \left \{ 
\begin{array}{ll}
0 &\quad \mbox{if $c \notin \sety$} \,, \\
g(|\sety|)&\quad \mbox{if $c \in \sety$} \,,
\end{array}
\right.
\end{equation}
where $|\sety|$ denotes the cardinality of the predicted set $\sety$. 
This family is characterized  by a decreasing sequence $(g(1), \ldots,g(K)) \in [0,1]^K$ that can have different forms.  
\citet{Delcoz2009LearningNC}, who use \emph{nondeterministic classification} as a synonym for set-based utility maximization, concentrate on three scores from the information retrieval community: precision with $g_P(s) = 1/s$, recall with $g_R(s)=1$, and the F$_{\beta}$-measure with $g_{F\beta}(s) = (1+\beta^2)/(\beta^2+s)$. Other utility functions with specific choices for $g$ are also studied in the literature \citep{Corani2008NCC,Corani2009LNCC,Zaffalon2012EvaluatingCC,Yang2017b,Nguyen2018RelMCC}. Those utility functions include:  
$$g_{\delta,\gamma}(s) = \frac{\delta}{s} - \frac{\gamma}{s^2} \,, \qquad g_{\delta}(s) = 1- \exp{\left(-\frac{\delta}{s}\right)},   \quad g_{\log}(s) = \log \left(1 + \frac{1}{s} \right) \,.$$
Especially $g_{\delta,\gamma}(s)$ is commonly used in the above papers, where $\delta$ and $\gamma$ can only take certain values to guarantee that the utility is in the interval $[0,1]$. Precision (also called discounted accuracy) corresponds to the case $(\delta,\gamma)=(1,0)$. However, typical choices for $(\delta,\gamma$) are $(1.6,0.6)$ and $(2.2,1.2)$ \citep{Nguyen2018RelMCC}, which overcome some of the limitations of precision (see below for a discussion). The utility function $g_{\delta}$ is an exponentiated version of precision, where the parameter $\delta$ controls the reward when sets become larger. 

\subsection{Contributions and outline}
In this paper we will focus on aspects related to optimizing (\ref{eq:bayesoptimal}). This is a non-trivial optimization problem,
as a brute-force search requires checking all subsets of $\mathcal{Y}$, resulting in an exponential time complexity. However, we will be able to find the Bayes-optimal prediction more efficiently. As our main contribution, we present several algorithms that solve (\ref{eq:bayesoptimal}) in an efficient manner. In the literature the work of \citet{Delcoz2009LearningNC} is the closest to our work. We extend their work in two directions: 1) the algorithms that we introduce are more efficient in multi-class classification settings where the number of classes is large, such as language modelling and reinforcement learning, and 2) our algorithms are applicable to a wide range of utility functions, unlike the algorithm of \citet{Delcoz2009LearningNC}, which concentrates on the F$_\beta$-measure. 


In Section~\ref{sec:theory} we present several theoretical results. Those results are essential building blocks for solving (\ref{eq:bayesoptimal}), but we also discuss the impact of these results for different utility functions. The algorithms that we develop are further materialized in Section~\ref{sec:algorithmic_solutions}. We first discuss an exact Bayes-optimal algorithm that makes $K$ queries to the conditional distribution $P(c\, |\, \bx)$, with $K$ the number of classes. In addition, we also introduce two approximate algorithms that make less than $K$ calls to $P(c \,|\, \bx)$. Those algorithms are based on two different paradigms: maximum inner product search and
hierarchical factorization of the conditional distribution. To conclude the theoretical part of this work, we provide an overview of related work in Section \ref{sec:relwork}. In Section~\ref{sec:res:exps} three different types of experimental results are discussed. In a first experiment we use image and text classification datasets to highlight the usefulness of set-valued prediction in case of uncertainty. In a second type of experiments, we evaluate the exact algorithm against some simple baselines for set-valued prediction.
In a last experiment, we focus on the runtime of the exact algorithm, highlighting the additional speedups that can be obtained by considering approximate algorithms for set-valued prediction.   

\section{Theoretical results}
\label{sec:theory}

In this section, we present several theoretical results as building blocks of the algorithms that we consider later on. We start with some general results, followed by a discussion of considerations for specific utility functions. 

\subsection{General results}
The formulation in (\ref{eq:bayesoptimal}) seems to suggest that all subsets of $\mathcal{Y}$ need to be analyzed to find the Bayes-optimal solution, but a first  result shows that this is not the case. 

\begin{restatable}{thm}{ksubsets}
\label{thm:ksubsets}
The exact solution of (\ref{eq:bayesoptimal}) can be computed by analyzing only $K$ subsets of $\mathcal{Y}$.  
\end{restatable}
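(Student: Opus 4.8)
The key observation is that the utility function in (\ref{eq:ufamily}) depends on the predicted set $\sety$ only through its cardinality $|\sety|$ and through which classes it contains. So fix a target cardinality $s \in \{1,\ldots,K\}$ and ask: among all $\sety$ with $|\sety| = s$, which one maximizes the expected utility? The plan is to show that this ``best set of size $s$'' is always the set $\sety_s$ consisting of the $s$ classes with highest conditional probability $P(c\,|\,\bx)$, and that therefore the Bayes-optimal solution lies in the family $\{\sety_1, \ldots, \sety_K\}$, which has exactly $K$ members.

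First I would expand the expected utility for an arbitrary set $\sety$ with $|\sety| = s$ using (\ref{eq:ufamily}):
\begin{equation*}
U(\sety,P,u) = \sum_{c \in \mathcal{Y}} u(c,\sety) P(c\,|\,\bx) = g(s) \sum_{c \in \sety} P(c\,|\,\bx) \,.
\end{equation*}
Since $g(s)$ is a fixed nonnegative constant once $s$ is fixed, maximizing $U$ over sets of size $s$ amounts to maximizing $\sum_{c \in \sety} P(c\,|\,\bx)$ over sets of size $s$, which is clearly achieved by taking the $s$ classes with the largest conditional probabilities, i.e.\ by $\sety_s$ (ties can be broken arbitrarily). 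Hence the overall optimum satisfies
\begin{equation*}
\max_{\sety \in 2^{\mathcal{Y}} \setminus \{\emptyset\}} U(\sety,P,u) = \max_{s \in \{1,\ldots,K\}} U(\sety_s, P, u) = \max_{s \in \{1,\ldots,K\}} g(s) \sum_{i=1}^{s} p_{(i)} \,,
\end{equation*}
where $p_{(1)} \ge p_{(2)} \ge \cdots \ge p_{(K)}$ are the sorted conditional probabilities. The right-hand side is a maximum over $K$ candidate sets, which proves the claim.

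The argument above is essentially all there is; the ``obstacle'' is really just a matter of making the reduction rigorous rather than any deep difficulty. The one subtlety worth being explicit about is why it suffices to restrict attention to the $g$-family of (\ref{eq:ufamily}) at all — the theorem as stated should be read relative to that family, so I would make sure the statement's scope is pinned to utilities of this form (the paper introduces no others). A second minor point is handling ties in the conditional probabilities: when $p_{(s)} = p_{(s+1)}$ there may be several optimal sets of size $s$, but they all attain the same expected utility, so picking any fixed tie-breaking rule to define $\sety_s$ does no harm. With these caveats noted, the proof is complete.
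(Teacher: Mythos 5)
Your proof is correct and follows essentially the same approach as the paper: rewrite the expected utility as $g(|\sety|)\sum_{c\in\sety}P(c\,|\,\bx)$, fix the cardinality $s$ and observe that the inner maximization is solved by the top-$s$ set, and then take the outer maximum over $s\in\{1,\ldots,K\}$. Your extra remark about tie-breaking is a harmless addition the paper leaves implicit.
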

\begin{proof}
With
$P(\sety\,|\,\bx) = \sum_{c \in \sety} P(c\,|\, \bx)$, the expected utility can be written as 

\begin{align}
\label{eq:topl}
U(\sety,P,u) &= \sum_{c \in \mathcal{Y}} u(c,\sety) P(c\,|\,\bx)
     = \sum_{c \in \sety} u(c,\sety) P(c\,|\,\bx)+
          \sum_{c' \notin \sety} u(c',\sety) P(c'\,|\,\bx)\,,
     \nonumber \\ 
&=  \sum_{c \in \sety} g(|\sety|) P(c\,|\,\bx) = g(|\sety|)  P(\sety\,|\,\bx) \,,
\end{align}
where the last summation in the second equality cancels out since $u(c',\sety)=0$. Let us decompose (\ref{eq:bayesoptimal}) into an inner and an outer maximization step.
The inner maximization step then becomes
\begin{align}
\sety_u^{*s} 
&= \argmax_{|\sety| = s} g(s)  P(\sety\,|\,\bx) = \argmax_{|\sety| = s}  P(\sety\,|\,\bx) \,,
\label{eq:innerexact}
\end{align}
for $s = \{1,\ldots,K\}$, where the last equality trivially holds due to $g(s)$ being constant. This step can be done very efficiently, by sorting the  conditional class probabilities, as for a given $s$, only the subset with highest probability mass needs to be considered. The outer maximization simply consists of computing
\begin{align}
\sety_u^{*}(\bx) =  \argmax_{\sety\in \{\sety_u^{*1},\ldots,\sety_u^{*K}\}} g(|\sety|)  P(\sety\,|\,\bx) \,, 
\label{eq:outerexact}
\end{align}
which only requires the evaluation of $K$ sets. 
\end{proof} 

So, one only needs to evaluate $\sety_u^{*1},\ldots,\sety_u^{*K}$ to find the Bayes-optimal solution, which limits the search to $K$ subsets. 
In fact, we can even do better as it turns out that by restricting $g$, we can assure that the sequence $U(\sety_u^{*1},P,u),\ldots,U(\sety_u^{*K},P,u)$ will have reached its global maximum when it starts to decrease. This will allow us to further limit the search, by means of an early stopping criterion, as soon as we reach that maximum. The restriction required for $g$ is $(1/x)$-convexity, i.e., convexity after a $(1/x)$ transformation. This is a somewhat surprising and rather technical result that is summarized in the following definition and theorem. 

\begin{restatable}{defn}{defOneOverXConcave}
A sequence $g(1),g(2),\ldots,g(K)$ is $(1/x)$-convex if 
\begin{equation}
\label{eq:1overx}
1/g(s+1) \leq \frac{1/g(s) + 1/g(s+2)}{2} \quad \mbox{for all } s \in \{1,\ldots,K-2\} \, .
\end{equation}
\end{restatable}

\begin{restatable}{thm}{stopping} 
Let $g(1),g(2),\ldots,g(K)$ be a decreasing $(1/x)$-convex sequence. Then the following implication holds for any $s \in \{1,\ldots,K-2\}$: 
$$U(\sety_u^{*s},P,u) > U(\sety_u^{*s+1},P,u) \Longrightarrow U(\sety_u^{*s+1},P,u) > U(\sety_u^{*s+2},P,u) \,.$$ 
\label{thm:stopping}
\end{restatable}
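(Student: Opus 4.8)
The plan is to reduce the statement to a scalar inequality among the three values $g(s),g(s+1),g(s+2)$, exploiting the closed form of the expected utility established in the proof of Theorem~\ref{thm:ksubsets}.

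First I would rewrite both sides in that closed form. Recall $U(\sety_u^{*s},P,u) = g(s)\,P(\sety_u^{*s}\,|\,\bx)$, where $\sety_u^{*s}$ collects the $s$ classes of largest conditional probability. Sorting those probabilities as $p_{(1)}\ge p_{(2)}\ge\cdots\ge p_{(K)}$ and writing $P_s := \sum_{i=1}^{s} p_{(i)} = P(\sety_u^{*s}\,|\,\bx)$, we have $P_{s+1}=P_s+p_{(s+1)}$ and $P_{s+2}=P_{s+1}+p_{(s+2)}$, with $p_{(s+1)}\ge p_{(s+2)}\ge 0$. Abbreviating $a=g(s)$, $b=g(s+1)$, $c=g(s+2)$ (all positive, since $(1/x)$-convexity presupposes $1/g$ is defined), the hypothesis $U(\sety_u^{*s},P,u)>U(\sety_u^{*s+1},P,u)$ becomes
\begin{equation*}
(a-b)\,P_s > b\,p_{(s+1)}\,,
\end{equation*}
while the desired conclusion $U(\sety_u^{*s+1},P,u)>U(\sety_u^{*s+2},P,u)$ is equivalent to $(b-c)\,P_{s+1} > c\,p_{(s+2)}$.

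Next I would pin down the strict orderings that the hypothesis forces. Since its right-hand side is nonnegative and $P_s\ge 0$, the hypothesis implies $P_s>0$ and $a>b$. It also implies $b>c$: the sequence is decreasing so $b\ge c$, and if $b=c$ then (\ref{eq:1overx}) at this $s$ would read $1/b\le (1/a+1/b)/2$, i.e.\ $a\le b$, a contradiction. Dividing the hypothesis by $a-b>0$ gives $P_s > b\,p_{(s+1)}/(a-b)$, hence
\begin{equation*}
P_{s+1} = P_s + p_{(s+1)} > \frac{b\,p_{(s+1)}}{a-b} + p_{(s+1)} = \frac{a\,p_{(s+1)}}{a-b}\,,
\end{equation*}
so, multiplying by $b-c>0$, $(b-c)\,P_{s+1} > \dfrac{a(b-c)}{a-b}\,p_{(s+1)}$. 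The last step is to notice that $(1/x)$-convexity is exactly what yields $\tfrac{a(b-c)}{a-b}\ge c$: clearing the positive denominator $abc$ in $\tfrac2b\le\tfrac1a+\tfrac1c$ gives $b(a+c)\ge 2ac$, i.e.\ $a(b-c)\ge c(a-b)$. Combining with $p_{(s+1)}\ge p_{(s+2)}$ then gives $(b-c)\,P_{s+1} > c\,p_{(s+1)} \ge c\,p_{(s+2)}$, which is the conclusion.

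The computations are all routine once the target is in view, so I expect the only genuinely delicate points to be (i) realising that the hypothesis should be used to \emph{lower-bound} $P_{s+1}$ — so that the factor $a/(a-b)$ appears and cancels against $a(b-c)/(a-b)$ — rather than being manipulated head-on, and (ii) the small observation that the hypothesis already forces the strict inequalities $a>b$ and $b>c$, which is what legitimises the division and the sign of $b-c$ used throughout.
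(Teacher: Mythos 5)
Your proof is correct and takes essentially the same route as the paper: both reduce the hypothesis to $P_s > \frac{g(s+1)}{g(s)-g(s+1)}p_{s+1}$, lower-bound $P_{s+1}$ by $\frac{g(s)}{g(s)-g(s+1)}p_{s+1}$, and then use $(1/x)$-convexity (in the equivalent form $\frac{g(s)}{g(s)-g(s+1)} \geq \frac{g(s+2)}{g(s+1)-g(s+2)}$) together with $p_{s+1}\ge p_{s+2}$ to conclude. The only difference is that you explicitly verify the strict inequalities $g(s)>g(s+1)$ and $g(s+1)>g(s+2)$ (the latter via $(1/x)$-convexity) before dividing, a point the paper's chain of equivalences leaves implicit.
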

\begin{proof}
Let us first observe the following equivalence:
\begin{eqnarray}
\label{eq:semi}
&&1/g(s+1) \leq \frac{1/g(s) + 1/g(s+2)}{2} \nonumber \\
&\Leftrightarrow & 2 \leq \frac{g(s+1)}{g(s)} + \frac{g(s+1)}{g(s+2)} \nonumber \\ 
&\Leftrightarrow &g(s)g(s+1) + g(s+2) g(s+1) \geq 2g(s) g(s+2) \nonumber \\
&\Leftrightarrow& g(s)[g(s+1) - g(s+2)] \geq g(s+2) [g(s) - g(s+1)] \nonumber \\
&\Leftrightarrow& \frac{g(s)}{g(s) - g(s+1)} \geq \frac{g(s+2)}{g(s+1) - g(s+2)} \nonumber \\
&\Leftrightarrow& \frac{g(s+1)}{g(s) - g(s+1)} + 1 \geq \frac{g(s+2)}{g(s+1) - g(s+2)}\,. 
\end{eqnarray}
Assume that for a given $s$ it holds that $U(\sety_u^{*s},P,u) > U(\sety_u^{*s+1},P,u)$. Let $p_i = P(c_i \, | \, \bx)$ and observe that the following equivalences hold: 
\begin{eqnarray}
U(\sety_u^{*s},P,u) > U(\sety_u^{*s+1},P,u) &\Leftrightarrow& g(s) \sum_{i=1}^s p_i > g(s+1) \sum_{i=1}^{s+1} p_i \nonumber \\
&\Leftrightarrow& [g(s)-g(s+1)] \sum_{i=1}^s p_i > g(s+1) p_{s+1}  \nonumber \\
&\Leftrightarrow&  \sum_{i=1}^s p_i > \frac{g(s+1)}{g(s)-g(s+1)} p_{s+1} \,.
\label{eq:s}
\end{eqnarray}
Observe that from (\ref{eq:semi}) and (\ref{eq:s}) it follows that:
\begin{eqnarray*}  \sum_{i=1}^{s+1} p_i > \frac{g(s+1)}{g(s)-g(s+1)} p_{s+1} + p_{s+1} 
&\Leftrightarrow&  \sum_{i=1}^{s+1} p_i > \left(\frac{g(s+1)}{g(s)-g(s+1)} +1 \right) p_{s+1} \\
&\Rightarrow&  \sum_{i=1}^{s+1} p_i > \frac{g(s+2)}{g(s+1)-g(s+2)} p_{s+2} \\
&\Leftrightarrow&U(\sety_u^{*s+1},P,u) > U(\sety_u^{*s+2},P,u) \,.
\end{eqnarray*}
This is what we needed to prove. 
 \end{proof}

Thus, what Theorem~\ref{thm:stopping} tells us is that, for $(1/x)$-convex sequences, we have found a stopping criterion so that even less than $K$ sets need to be analyzed when optimizing (\ref{eq:bayesoptimal}). More specifically, we can stop as soon as the sequence 
$$
U(\sety_u^{*1},P,u),U(\sety_u^{*2},P,u)\ldots,U(\sety_u^{*K},P,u)
$$
starts to decrease. The stopping criterion is $U(\sety_u^{*s},P,u) > U(\sety_u^{*s+1},P,u)$ for a given $s \in \{1,\ldots,K-1\}$.

Theorems \ref{thm:ksubsets} and \ref{thm:stopping} provide guarantees to optimize (\ref{eq:bayesoptimal}) in a classical decision-theoretic context. 
In the appendix, we present a short theoretical analysis that relates the Bayes-optimal solution for the set-based utility functions to the solution obtained on the conditional class probabilities given by a trained model. The goal is to upper bound the regret of the set-based utility functions by the $L_1$ error of the class probability estimates. Similarly as in decision-theoretic utility maximization, the analysis is performed on the level of a single $\bx$. 
 
\subsection{Considerations w.r.t. specific utility functions}
Let us discuss the implications of the above theorems for the utility functions that were mentioned in the introduction (see also Table~\ref{tbl:utility-scores} and the left panel of Figure~\ref{fig:g} for an overview). Is $(1/x)$-convexity satisfied for those utility functions? For the ones that are most commonly used in the literature the answer turns out to be yes: precision, recall, the F$_\beta$-measure, as well as the $g_{\delta,\gamma}$ family for recommended values of $\delta$ and $\gamma$, are all utilities with associated $(1/x)$-convex sequences. Let us remark that $(1/x)$-convexity cannot easily be assessed by plotting the graph of a specific sequence $g(s)$.  In practice one needs to check this formally using (\ref{eq:1overx}) for all $s$. 

 Precision, with $g_P(s) = 1/s$, in fact defines ``how convex'' a sequence is allowed to be, because (\ref{eq:1overx}) is satisfied as an equality in that boundary case. As shown in the left panel of Figure~\ref{fig:g}, most utility functions from the literature behave very similar to precision; they decrease very quickly, and their curvature is similar to precision, but they are somewhat less convex (remark that for functions the degree of convexity is determined by the slope of the first derivative). From that perspective, it is obvious that concave sequences will be $(1/x)$-convex. The following proposition states this formally.  

\begin{defn}
A sequence $g(1),g(2),...,g(K)$ is concave if 
$$g(s+1) \geq \frac{g(s) + g(s+2)}{2} \qquad \mbox{for all } s \in \{1,...,K-2\}$$
\end{defn}

\begin{proposition}
\label{thm:concave}
Let $g(1),g(2),...,g(K)$ be a decreasing, concave sequence. Then $g(1),g(2),...,g(K)$ is also $(1/x)$-convex.  
\end{proposition}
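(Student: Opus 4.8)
The plan is to clear denominators and reduce the claimed inequality to an elementary one. Writing $a = g(s)$, $b = g(s+1)$, $c = g(s+2)$, all strictly positive (the sequence takes values in $[0,1]$, and we tacitly assume positivity, as otherwise $1/g$ is not even defined), the definition of $(1/x)$-convexity at $s$ is $1/b \le (1/a + 1/c)/2$. Multiplying through by $2abc > 0$, this is equivalent to $2ac \le b(a+c)$, which is exactly the third line in the chain of equivalences (\ref{eq:semi}). So it suffices to establish $2ac \le b(a+c)$ under the hypotheses that $b \ge (a+c)/2$ (concavity) and $a, c > 0$.

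Next I would use concavity to lower-bound the right-hand side: since $a + c > 0$, we get $b(a+c) \ge \frac{(a+c)^2}{2}$. It then remains to show $2ac \le \frac{(a+c)^2}{2}$, i.e.\ $4ac \le (a+c)^2$, i.e.\ $0 \le (a-c)^2$, which holds trivially. Chaining the two inequalities yields $2ac \le b(a+c)$, hence $(1/x)$-convexity at $s$; since $s \in \{1,\ldots,K-2\}$ was arbitrary, the sequence is $(1/x)$-convex.

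It is worth noting that the argument does not actually use that the sequence is decreasing — concavity and positivity alone suffice — although decreasingness is of course what makes the surrounding stopping-criterion story (Theorem~\ref{thm:stopping}) meaningful. There is no real combinatorial or analytic obstacle: the only point requiring a word of care is the positivity assumption, which is what makes the reciprocals meaningful and fixes the sign of the multiplier $2abc$; the entire proof is essentially the observation that $b(a+c) \ge \frac{(a+c)^2}{2} \ge 2ac$.
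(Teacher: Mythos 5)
Your proof is correct, and it takes a genuinely different route from the paper's. The paper reduces $(1/x)$-convexity to the equivalent inequality $g(s)\bigl[g(s+1)-g(s+2)\bigr] \ge g(s+2)\bigl[g(s)-g(s+1)\bigr]$ (the same cleared-denominator form you reach, rearranged), and then deduces it by multiplying two inequalities term by term: $g(s)\ge g(s+2)$ from monotonicity, and $g(s+1)-g(s+2)\ge g(s)-g(s+1)$ from concavity. Crucially, that multiplication step needs the differences to be nonnegative, i.e.\ it uses decreasingness a second time to fix signs. Your argument instead lower-bounds $b(a+c)$ directly via concavity ($b\ge(a+c)/2$) and then invokes $(a+c)^2\ge 4ac$, never touching monotonicity. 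This is slightly cleaner, and it establishes a strictly stronger statement: positivity and concavity alone imply $(1/x)$-convexity, so the ``decreasing'' hypothesis in the proposition is actually redundant for this implication — a point you correctly flag. The one hypothesis you do rightly insist on making explicit is strict positivity, which both proofs need (the paper's for $g(s+2)>0$, yours for the reciprocals and the multiplier $2abc$), and which the paper leaves implicit.
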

\begin{proof}
As shown above, the following equivalence holds: 
\begin{eqnarray}
\label{eq:equi}
&&1/g(s+1) \leq \frac{1/g(s) + 1/g(s+2)}{2} \nonumber \\
&\Leftrightarrow& g(s)[g(s+1) - g(s+2)] \geq g(s+2) [g(s) - g(s+1)] 
\end{eqnarray}
Due to the fact that $g(1),g(2),...,g(K)$ is a decreasing, concave sequence we know that $g(s) \geq g(s+2) $ and $g(s+1) - g(s+2) \geq g(s) - g(s+1) \,.$
Combining the two inequalities lets us conclude that the sequence must be $(1/x)$-convex.
\end{proof}

Apart from $(1/x)$-convexity, there is another property that an interesting utility function should obey: $g(s)$ should be lower bounded by precision, i.e.\ $g_P$. Precision and recall are frequently used in binary classification, but one may argue that they are both not very useful utility functions for assessing set-valued predictions. For recall this is pretty obvious, as this measure does not have any penalty for the size of the set. Yet, also for precision, there is a problem. Its utility maximiser will always be a set of cardinality one. For example, consider a multi-class problem with hundred classes, and assume that for a given instance the conditional class probabilities are $0.1$ for ten of these hundred classes. Clearly, in this case, the best prediction is to return a set consisting of the ten classes, resulting in an expected utility of $0.1$. If $ g(10) = 1/10$, a singleton set that contains one of the ten classes also yields an expected utility of $0.1$. Both are Bayes-optimal predictions in view of (\ref{eq:bayesoptimal}). Thus, the problem with precision is that it is not risk-averse. In the face of uncertainty, a risk-averse utility function will prefer a set with many classes over a singleton set that contains one of those classes \citep{Zaffalon2012EvaluatingCC}. 
That is why we highlight $g(s) \geq 1/s$ as an absolute requirement. Utility functions violating this requirement are practically pointless, because the solution to (\ref{eq:bayesoptimal}) is always a set of cardinality one in that case. The next proposition formalizes this claim, which has been reported by \citet{Zaffalon2012EvaluatingCC} from a different perspective and using a different notation.  

\begin{proposition}
\label{prop:precision}
Let $g$ be a sequence such that $g(1) = 1$, then for any distribution $P$ the following statements hold.
\begin{itemize}
\item When $g(s) < 1/s$ for some $s >1$, then $\sety_u^{*s}$ is not a solution of (\ref{eq:bayesoptimal}), 
\item When $g(s) < 1/s$ for all $s > 1$, then the unique solution of (\ref{eq:bayesoptimal}) is $\sety_u^{*1}$,
\item  When $g(s) = 1/s$ for all $s$, then $\sety_u^{*1}$ is a solution of (\ref{eq:bayesoptimal}). 
\end{itemize}
\end{proposition}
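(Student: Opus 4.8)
The plan is to reduce everything to the closed form of the expected utility established in the proof of Theorem~\ref{thm:ksubsets}: for a predicted set $\sety$ of size $s$ one has $U(\sety,P,u) = g(s)\,P(\sety\,|\,\bx) \le g(s)\sum_{i=1}^{s}p_i$, where $p_1\ge p_2\ge\cdots\ge p_K$ denote the sorted conditional probabilities $P(c_i\,|\,\bx)$, with equality holding precisely for the top-$s$ set $\sety_u^{*s}$. The single elementary fact doing the work is $\sum_{i=1}^{s}p_i\le s\,p_1$ (each $p_i\le p_1$), combined with the hypothesis $g(1)=1$, which yields $U(\sety_u^{*1},P,u)=p_1$. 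Note also $p_1\ge 1/K>0$, so no degenerate situation arises.

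For the first bullet I would argue that if $g(s)<1/s$ for some $s>1$, then
$$U(\sety_u^{*s},P,u)=g(s)\sum_{i=1}^{s}p_i<\frac{1}{s}\sum_{i=1}^{s}p_i\le p_1=U(\sety_u^{*1},P,u),$$
so $\sety_u^{*s}$ is strictly dominated by $\sety_u^{*1}$ and hence cannot maximise (\ref{eq:bayesoptimal}). The second bullet is then immediate: if $g(s)<1/s$ for every $s>1$, the same chain gives $U(\sety,P,u)\le g(|\sety|)\sum_{i=1}^{|\sety|}p_i<p_1$ for every set $\sety$ with $|\sety|>1$, so every non-singleton is strictly suboptimal; among singletons the best is the top-$1$ set $\sety_u^{*1}$, which is therefore the solution. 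For the third bullet, with $g(s)=1/s$ for all $s$ the computation gives $U(\sety_u^{*s},P,u)=\tfrac{1}{s}\sum_{i=1}^{s}p_i\le p_1=U(\sety_u^{*1},P,u)$, and since by Theorem~\ref{thm:ksubsets} some Bayes-optimal set lies among $\sety_u^{*1},\ldots,\sety_u^{*K}$, the set $\sety_u^{*1}$ attains the maximum and hence is a solution --- now possibly not the only one, e.g.\ when several of the largest probabilities coincide.

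I do not expect any genuine obstacle here; the argument is almost mechanical. The only things to watch are keeping strict versus non-strict inequalities straight --- so that ``not a solution'' and ``unique solution'' are justified in the first two bullets while only ``a solution'' is claimed in the boundary case $g(s)=1/s$ --- and, if one prefers not to invoke Theorem~\ref{thm:ksubsets}, observing that the bound $U(\sety,P,u)\le g(|\sety|)\sum_{i=1}^{|\sety|}p_i$ already suffices to compare $\sety_u^{*1}$ against every subset of $\mathcal{Y}$ directly.
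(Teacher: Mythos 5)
Your proof is correct and follows essentially the same route as the paper's: both reduce everything to the identity $U(\sety,P,u)=g(|\sety|)P(\sety\,|\,\bx)$ from Theorem~\ref{thm:ksubsets} and then invoke the elementary inequality that the average of the top-$s$ probabilities is at most the top-$1$ probability (equivalently $\sum_{i=1}^{s}p_i\le s\,p_1$), together with $g(1)=1$, to compare $\sety_u^{*s}$ with $\sety_u^{*1}$. You spell out a bit more explicitly why the second bullet gives uniqueness and why the third gives only weak optimality, but there is no substantive difference from the paper's argument.
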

\begin{proof}
Let us start by proving the first statement. When $g(s) < 1/s$ it follows that:
$$U(\sety_u^{*s},P,u) < \frac{P(\sety_u^{*s}\,|\,\bx)}{s} \leq P(\sety_u^{*1}\,|\,\bx) = U(\sety_u^{*1},P,u) \,,$$
where the second inequality holds because $\sety_u^{*1}$ must correspond to the mode of $P$, and the last equality holds because we assume $g(1)=1$. So, we have shown that, when $g(s) < 1/s$ for some $s$, it holds that $ U(\sety_u^{*s},P,u) < U(\sety_u^{*1},P,u)$ for any distribution $P$.
This reasoning can be applied to any $s>1$, which also proves the second statement. Using $g(s) = 1/s$ in the above reasoning, the inequalities become equalities. This proves the third statement. 
\end{proof}

The proposition lets us conclude that it is pointless to use any utility for which $g(s) < 1/s$ for all $s$, because $\sety_u^{*1}$ is then the unique solution of (\ref{eq:bayesoptimal}). It is also pointless to use $g_P$ as utility function: $\sety_u^{*1}$ will then be one of the solutions of (\ref{eq:bayesoptimal}), but there might be other solutions as well. When $g(s) < 1/s$ for some but not all $s$, only sets of specific cardinalities can be the solution of (\ref{eq:bayesoptimal}). This is probably also unwanted in practice. 

So, what about the other utility functions from the literature? In Figure~\ref{fig:g}, one can see that utility functions $g_{F1}$ and $g_{\delta=1.6,\gamma=0.6}$ behave similarly as $g_P$, but they are lower bounded by $g_P$, so with those utility functions it will be possible to predict non-singleton sets. In general, the faster those functions decrease, the smaller the sets that will be predicted. For $g_{F\beta}$ the parameter $\beta$ controls the degree of convexity, as shown in the right panel of Figure~\ref{fig:g}. As such, this parameter will in a way also control the size of the sets that are predicted. Note that Proposition~\ref{prop:precision} cannot be applied for $g_{\log}$ and $g_{\delta}$, because $g(1) \neq 1$ in those cases. One would need to rescale those utility functions first, before applying the proposition. 

As a summary of the above discussion, we define four properties that an interesting utility function $g$ should have:
\begin{itemize}
\item $g$ should be strictly decreasing. Smaller sets should have a bigger utility than bigger sets.
\item $g(1) = 1$. This is just an interesting property to compare different utility functions. When a function violates this property, it is best rescaled. 
\item $g(s) >1/s$ for all $s$. The spectrum of utility maximizers will be limited to sets of particular cardinalities when $g(s) <1/s$ for some $s$. If $g(s) <1/s$ for all $s$, the utility function becomes completely useless.   
\item $g$ should be $(1/x)$-convex. This guarentees that the utility maximizer can be found efficiently.     
\end{itemize}
There is a close link between the third and fourth property: if $g(s) < 1/s$ for some $s$, then $g$ cannot be $(1/x)$-convex. However, the converse is not true. To give such an example, let us introduce a novel family of utility functions that generalizes a utility function that is often used in the literature on multi-class classification with reject option, see e.g.\ \citep{Ramaswamy2015CAMCRO}. 
When a reject option is allowed, the prediction can only be a singleton or the full set $\mathcal{Y}$ containing $K$ classes. The first case typically gets a reward of one, while the second case should receive a lower reward, e.g.\ $1-\alpha$.  This second case corresponds to abstaining, i.e., not predicting any class label, and the (user-defined) parameter $\alpha$ specifies a penalty for doing so, with the requirement $0< \alpha < 1-1/K$ to be risk-averse. To include sets of any cardinality $s$, the utility could be generalized as follows:
\begin{equation}
\label{eq:ualphabeta}
g_{\alpha,\beta}(s) = 1 - \alpha \left(\frac{s-1}{K-1} \right)^\beta \,,
\end{equation}
which we call the generalized reject option utility. Here, we have the same interpretation for $\alpha$, whereas $\beta \in (0,\infty)$ defines whether $g(s)$ is convex or concave, as shown in the bottom panel of Figure~\ref{fig:g}. While convexity (like in most of the above utility functions) appears natural in most applications, a concave utility might be useful when predicting a large set is tolerable. In the limit, when $\beta \rightarrow 0$, we obtain the simple utility function for classification with reject option. 

The $g_{\alpha,\beta}$ family is quite intuitive from an application perspective, and it has a lot of flexibility. This makes this family interesting, but for certain parameterizations it is not lower bounded by precision. It is important to choose $\alpha \leq \frac{K-1}{K}$ and $\beta \geq \log_{\frac{1}{K-1}}\frac{K}{2}+1$ to guarantee that $g_{\alpha,\beta}(s) \geq 1/s$ for all $s$ (see appendix for derivations). For example, as shown in the figure in the appendix, for $\alpha = 1$, $g_{\alpha,\beta}(s)$ is dominated by $g_P(s)$ for large $s$. We also observed that $g_{\alpha,\beta}(s)$ is for some $\alpha$ and $\beta$ not $(1/x)$-convex. In contrast, the $g_{F\beta}$ family and $g_{\delta,\gamma}$ for recommended values of $\delta$ and $\gamma$ deliver utility functions that satisfy all the four properties that are listed above. This makes them interesting from an application perspective, and they will be our focus in the experiments.   
\begin{table}[t]
\centering
\caption{The utility functions $g(s)$ discussed in this paper.}
\label{tbl:utility-scores}
\arraycolsep=1pt
\begin{scriptsize}
\begin{tabular}{lll}
\toprule
Name & $g(s)$ & Article \\
\midrule
Precision & 
$g_P(s) = \frac{1}{s}$ &
\citep{Delcoz2009LearningNC}\\
Recall & 
$g_R(s) =1$ & 
\citep{Delcoz2009LearningNC}\\
F$_{\beta}$-measure &
$g_{F\beta}(s) =\frac{1+\beta^{2}}{s+\beta^{2}}$ &
\citep{Delcoz2009LearningNC}\\
Credal Utility &
$g_{\delta,\gamma}(s) =\frac{\delta}{s} - \frac{\gamma}{s^2}$ &
\citep{Zaffalon2012EvaluatingCC} \\
Exp. Utility &
$ g_{\delta}(s) =1-\exp{\left(-\frac{\delta}{s}\right)}$ &
\citep{Zaffalon2012EvaluatingCC} \\
Log. Utility &
$ g_{\log}(s) =\log(1 + \frac{1}{s})$ &
\citep{Zaffalon2012EvaluatingCC} \\
Reject Option & 
$g_{\mathrm{rej}}(s) =\left \{ 
\begin{array}{ll}
1 & \mbox{if $s=1$} \,, \\
1-\alpha & \mbox{if $s=K$} \,.
\end{array}
\right.$ 
&
\citep{Ramaswamy2015CAMCRO} \\
Gen. Reject Option & 
$ g_{\alpha,\beta}(s) = 1 - \alpha \left(\frac{s-1}{K-1} \right)^\beta$ &
Extension of \citep{Ramaswamy2015CAMCRO} \\
\bottomrule
\end{tabular} 
\end{scriptsize}
\end{table}
\begin{figure}[ht]
\centering
\begin{tabular}{cc}
\hskip -0.2in
\includegraphics[scale=0.35]{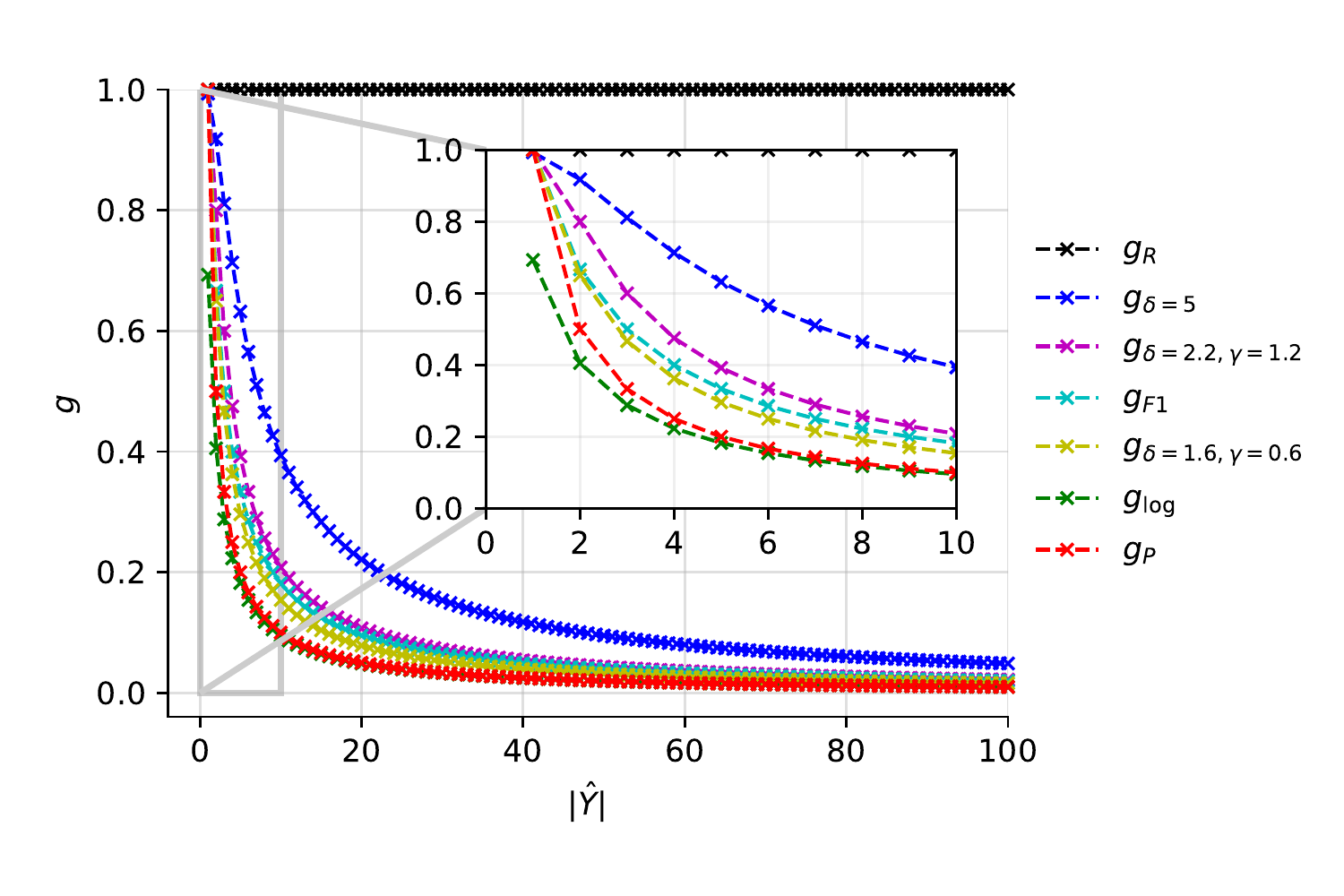} &
\includegraphics[scale=0.35]{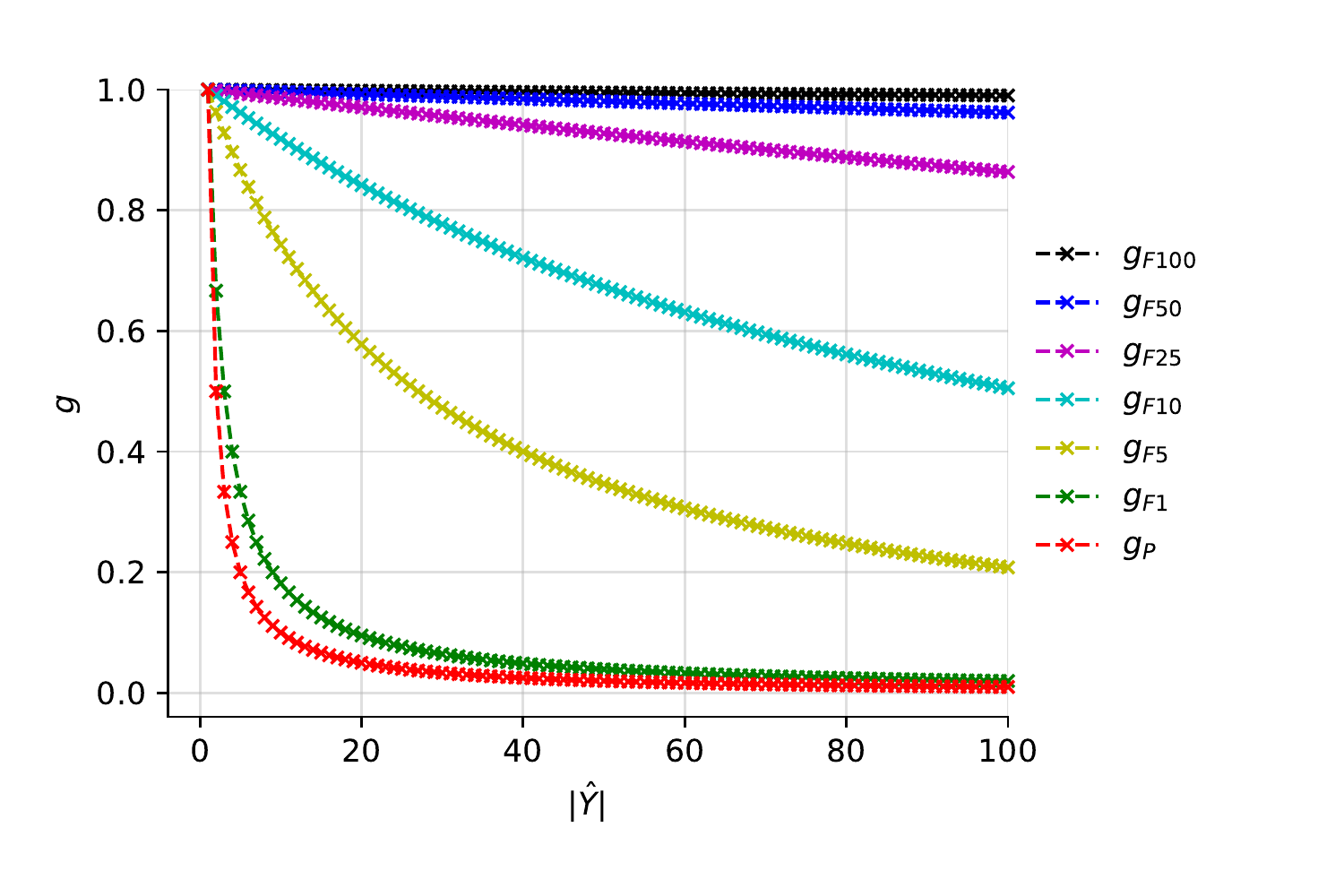} 
\end{tabular}
\caption{A visualization of $g$ in function of different values of $|\sety|$ and set-based utility functions. $K=100$.}
\label{fig:g}
\end{figure}

\section{Algorithmic solutions}
\label{sec:algorithmic_solutions}

\begin{algorithm}[t]
\begin{small}
\caption{SVBOP -- \textbf{input:} $u(\cdot)$, $\bx$, $\calY = \{c_1, \ldots, c_K\}$, $PC$  }
\begin{algorithmic}[1] 
\State $\sety \leftarrow \emptyset$, $p_{\sety} \leftarrow 0$, $U^* \leftarrow 0$ \Comment{{\scriptsize Initialize the current best solution, its probability and utility}}
\State $PC$.initPrediction($x$, $\calY$) \Comment{{\scriptsize Initialize the prediction procedure}}
\While{$(c, p_c) \leftarrow$  $PC$.getNextClass()}  \Comment{{\scriptsize Repeat until all the classes are returned by $PC$}}
	\State $\sety \leftarrow \sety \cup \{c\}$, $p_{\sety} \leftarrow p_{\sety} + p_c$ \Comment{{\scriptsize Update the current solution and its probability}}
	\State $U_{\sety} \leftarrow p_{\sety} \times g(\sety)$ \Comment{{\scriptsize Compute $U(\sety,P,u)$ according to Eq.~(\ref{eq:topl})}}
	\If{$U^* \leq U_{\sety}$} \Comment{{\scriptsize If the current solution is better than the best solution so far}}
		\State $\sety^{*}_u \leftarrow \sety$, $U^* \leftarrow U_{\sety}$ \Comment{{\scriptsize Replace the current best solution}}
		    \Else
	        \textbf{ break} \Comment{{\scriptsize If there is no improvement break the while loop according to Theorem \ref{thm:stopping}}}
        \EndIf
\EndWhile
\State \textbf{return} $\sety^{*}_u$ \Comment{{\scriptsize Return the set of classes with the highest utility}}
\end{algorithmic}
\label{alg:ubop-framework}
\end{small}
\end{algorithm}

The above theoretical results (in particular Theorems~\ref{thm:ksubsets} and~\ref{thm:stopping}) 
suggest that problem (\ref{eq:bayesoptimal}) can be efficiently solved for ($1/x$)-convex set-based utility functions. 
In this section, we present three algorithmic solutions for this problem, and are all based on the same generic framework. 
The first algorithm returns the Bayes-optimal solution to (\ref{eq:bayesoptimal}) in an exact manner. 
The other two algorithms compute approximate solutions, but yield substantial runtime gains. 
Those algorithms can be used when the number of classes is large. 

Algorithm~\ref{alg:ubop-framework} presents the generic framework. We use the acronym SVBOP, which stands for Set-Valued Bayes-Optimal Prediction. SVBOP uses a probabilistic classifier, denoted as $PC$, that supports two operations. 
The first operation, \textit{initPrediction}, initializes the prediction procedure for a given test example $\bx$. 
The second operation, \textit{getNextClass}, placed in the while loop, returns the next class label with respect to decreasing conditional class probabilities. 
In each subsequent iteration of the while loop, 
the solution of inner maximization~(\ref{eq:innerexact}) is for a given $s$ found 
by adding the class with the $s$-highest conditional class probability to the predicted set. 
In this way, $U(\sety_u^{*1},P,u)$,\ldots,$U(\sety_u^{*s},P,u)$ are computed in a sequential way, 
till the stopping criterion of Theorem~\ref{thm:stopping} is satisfied.

This framework can be seen as a generalization of an algorithm introduced by \citet{Delcoz2009LearningNC} 
for optimizing the F$_\beta$-measure in multi-class classification. 
There is also a strong correspondence with certain F$_\beta$-maximization algorithms in multi-label classification 
(see e.g.~\citealt{jansche07,Ye2012,Waegeman2014}). 


\subsection{SVBOP-Full}
\label{sec:ubop-ovr}

\begin{algorithm}[t]
\begin{small}
\caption{SVBOP-Full.initPrediction -- \textbf{input:} $\bx$, $\calY = \{c_1, \ldots, c_K\}$}
\begin{algorithmic}[1] 
\State $\calQ \leftarrow \emptyset$ \Comment{{\scriptsize Initialize a list to store classes and $P(c \,|\,  \bx)$}} 
\For{$c \in \calY$}
  $p_c \leftarrow P(c \,|\, \bx)$, $\calQ$.add\big(($c$, $p_c$)\big) \Comment{{\scriptsize Query $PC$ to get $P(c \,|\, \bx)$ for all classes}}
\EndFor
\State $\calQ$.sort()  \Comment{{\scriptsize Sort the list decreasingly according to $P(c \,|\, \bx)$}}
\end{algorithmic}
\label{alg:ubop-ovr-init}
\end{small}
\end{algorithm}

\begin{algorithm}[t]
\begin{small}
\caption{SVBOP-Full.getNextClass}
\begin{algorithmic}[1] 
\State \textbf{return} $\calQ$.pop() \Comment{{\scriptsize Pop the next highest element from the sorted list.}} 
\end{algorithmic}
\label{alg:ubop-ovr-next}
\end{small}
\end{algorithm}

The first algorithm is further referred to as SVBOP-Full, because it computes all conditional class probabilities. $PC$ is here a standard multi-class probabilistic classifier
that returns the estimated conditional class distribution for a given test example $\bx$. Examples of such classifiers are logistic regression, linear discriminant analysis, gradient boosting trees or neural networks with a softmax output layer.
%
The inference algorithm starts by querying the underling classifier to get all $K$ conditional class probabilities $P(c\,|\,\bx)$. 
Then, the conditional class probabilities are sorted in decreasing order (Algorithm~\ref{alg:ubop-ovr-init}). 
When in Algorithm~\ref{alg:ubop-framework} the next class label is needed, it can simply be taken from this sorted list (Algorithm~\ref{alg:ubop-ovr-next}).

This approach is simple and elegant but requires sorting of all $K$ conditional class probabilities, which results in an $O(K \log K)$ complexity. 
However, the most costly operation is usually querying the distribution $P$ to obtain values of conditional probabilities for all $K$ classes. 
In case of linear models, this cost scales linearly with the number of classes, multiplied by the number of non-zero feature values. 
For problems with a large number of classes, often referred to as extreme classification problems~\citep{Prabhu_Varma_2014}, this is usually too costly.

\subsection{Hierarchical search with similarity graphs (SVBOP-HSG)}


Since only class labels with high probability mass are required, a procedure would be desirable that  returns the top classes without the need to compute conditional class probabilities for all classes. 
To accomplish this, we leverage approaches for approximate nearest neighbor search \citep{Yagnik_et_al_2011, Shrivastava_Ping_2014, Johnson_et_al_2017} and adapt them to our setting. The use of such methods essentially becomes possible through two problem transformations: first, we reduce maximum probability search to maximum inner product search, which we then in turn  reduce to nearest neighbor search. In the following, we briefly comment on both reduction steps and eventually on approximate nearest neighbor search itself. 

As for the first step, note that most learning algorithms produce class probability estimates in a last step of the learning procedure by mapping scores to probabilities. A typical example, which is also used in our approach, is the softmax transformation:

\begin{equation}\label{eq:softmax}
P(c \,\vert\, \bx) = \frac{\exp(\bw_c \cdot \bx)}{\sum_{c' \in \calY} \exp(\bw_{c'} \cdot \bx )} \, .
\end{equation}

Obviously, as long as the probability is a monotone function of an inner product $\bw_{c} \cdot \bx$ between the query instance $\bx$ and a weight vector $\bw_{c}$, like in (\ref{eq:softmax}), finding $\argmax_{c}P(c \,\vert\, \bx)$ is equivalent to finding  $\argmax_{c} \bw_{c} \cdot \bx$. More generally, finding the top-$s$ inner products corresponds to finding the top-$s$ probabilities. 
Futhermore, there is no need to compute the value of the partition function, i.e., the denominator of (\ref{eq:softmax}), 
to find the optimal set-valued prediction. 
For a given $\bx$, the value of the partition function is constant for all $c$ 
and since $\argmax f(\bx) = \argmax a \times f(\bx)$, 
for any constant $a > 0$, 
the lack of normalization does not affect the result of both 
inner~(\ref{eq:innerexact}) and outer~(\ref{eq:outerexact}) maximization.
%


As for the second step, let us assume that, as discussed before, we have a linear model for each class $c$, which is represented by a vector $\bw_c$ in a suitable (perhaps transformed) feature space $\mathcal{X}$ (e.g., the last but one layer in a neural network, which is mapped to the output via softmax). Now, since the squared Euclidean distance between $\bw_c$ and $\bx$ is given by $d(\bw_c, \bx) = \| \bw_c \|_2 - 2  \bw_c \cdot \bx + \| \bx \|_2$, maximizing  $\bw_c \cdot \bx$ is ``almost'' equivalent to minimizing $d(\bw_c, \bx)$. More specifically, it is equivalent to minimizing the distance $d(\bw_c', \bx')$ between two expanded vectors $\bw_c'$ and $\bx'$. The former is obtained by adding an entry $-\sqrt{\| \bw_c \|_2}$ to $\bw_c$, and the latter by adding a $0$ to $\bx$. Consequently, maximizing inner products in $\mathcal{X}$ is equivalent to minimizing distances in the augmented space $\mathcal{X}'$.\footnote{Practically, this augmentation is often omitted for simplicity.} 


A reduction, as outlined above, is interesting because many methods for efficient nearest neighbor search have been proposed in the literature. In this work, we use Hierarchical Navigable Small World (H-NSW) graphs, introduced by~\citet{Malkov_and_Yashunin_2018}. This method is based on the concept of a similarity graph~\citep{Navarro_2002}, in which edges connect similar objects. In our case, these objects are weight vectors $\bw_{c}$. H-NSW uses multiple such graphs, each on a different level. The lowest level contains all objects, while higher levels contain successively sparser subsets of these objects. Roughly speaking, the idea is to search the nearest neighbors of the query $\bx$ level-wise (by traversing the graph for the layer in a greedy way), starting with the highest level. The neighbors found on each level do not necessarily correspond to the true neighbors of $\bx$ on the lowest level, but should at least indicate the region in which these neighbors are located, and hence provide a good entry point for refining the search on the next level. For technical details and the complete pseudocode of the H-NSW method, we refer to~\citet{Malkov_and_Yashunin_2018}. What the algorithm eventually returns is a list of $s$ weight vectors $\bw_c$ for the $s$ classes that have (approximately) the highest inner products with the test example $\bx$.


We conclude this section with two remarks. First, finding the top-$s$ classes may not be enough to satisfy the stopping criterion of Theorem \ref{thm:stopping}. To solve this problem, we use a simple doubling strategy. When $PC$ is requested to provide the $(s+1)$-st class, we double the value of $s$ and query the H-NSW index again. Since the nearest neighbor search is approximate, we append all new labels to the original list. In this way, we do not omit any new label with a probability higher than the minimum probability of labels found in the previous query. Second, this search method should lead to a faster inference than SVBOP-Full, as the number of inner products should be lower than a number required to compute all conditional class probabilities. While this method significantly speeds up inference, it adds additional cost to the training phase, due to the need to construct the H-NSW index. For training, one also relies on multi-class classifiers that scale linearly with the number of classes.

\begin{algorithm}[t]
\begin{small}
\caption{SVBOP-HSG.initPrediction -- \textbf{input:} $\bx$, $\calY = \{c_1, \ldots, c_K\}$}
\begin{algorithmic}[1] 
\State $\calQ \leftarrow \emptyset$ \Comment{{\scriptsize Initialize a list of classes with their inner prod. $\bw_{c} \cdot \bx $}} 
\State $i \leftarrow 0$ \Comment{{\scriptsize Initialize the class counter}} 
\State $\calQ' \leftarrow$ H-NSW Index.query($\bx$, $k$) \Comment{{\scriptsize Query for initial top-$k$ elements}}
\For{$c \in \calQ'$} \Comment{{\scriptsize For initial top-$k$ classes}}
    \State $\calQ\mathrm{.add}((c, \exp(\bw_c \cdot \bx)))$  \Comment{{\scriptsize Transform inner prod. to unnorm. $P(c|\bx)$ and add to list}}
\EndFor
\end{algorithmic}
\label{alg:ubop-hsg-init}
\end{small}
\end{algorithm}

\begin{algorithm}[t]
\begin{small}
\caption{SVBOP-HSG.getNextClass}
\begin{algorithmic}[1] 
    \State $i \leftarrow i + 1$ \Comment{{\scriptsize Increment class counter}}
	\If{$|\calQ| < i$} \Comment{{\scriptsize If class counter is greater than size of list of classes}}
	    \State $\calQ' \leftarrow$ H-NSW Index.query($\bx$, $2 \times
	    |\calQ|$) \Comment{{\scriptsize Perform doubling}}
	    \For{$c \in \calQ' / \calQ$} \Comment{{\scriptsize For each new class found by index}} 
	    \State $\calQ\mathrm{.add}((c, \exp(\bw_c \cdot \bx)))$  \Comment{{\scriptsize Transform inner prod. to unnorm. $P(c|\bx)$ and add to list}}
	    \EndFor 
    \EndIf
	\State \textbf{return} $\calQ$.at($i$) \Comment{{\scriptsize Return next class from the list}}
\end{algorithmic}
\label{alg:ubop-hsg-next}
\end{small}
\end{algorithm}

\subsection{Hierarchical factorization of the conditional distribution (SVBOP-HF)}

To have a compatible probabilistic classifier that allows for efficient prediction of top-$s$ classes, while having a much faster training at the same time, 
we investigate in this subsection a solution based on a hierarchical factorization of the distribution $P(c \,|\,\bx)$. 
This approach underlies many popular algorithms, such as nested dichotomies \citep{Fox_1997,Frank2004NestedD,melnikov2018}, 
conditional probability estimation trees \citep{Beygelzimer2009CondPT}, 
probabilistic classifier trees~\citep{Dembczynski2016ConsistencyOP},
or hierarchical softmax~\citep{Morin2005HierS}, often used in neural networks as an output layer. 

With a hierarchical tree structure over the classes, 
where the root represents the class space and leafs the singleton sets of classes, 
one can express the conditional class probability $P(c \,|\,\bx)$ via the chain rule of probability:
\begin{equation}
\label{eq:recpath}
P(c \, \vert \, \bx)  = \prod_{v \in \mathrm{Path}(c)} P(v \,|\, \mathrm{Parent}(v), \bx) \,,
\end{equation}
where $\mathrm{Path}(c)$ is a set of nodes on the path connecting the leaf and the root of the tree structure. 
$\mathrm{Parent}(v)$ gives the parent of node $v$, and for the root node $r$ we have $P(r \,|\, \mathrm{Parent}(r), \bx) = 1$. 
In each node of the tree, we train a multi-class probabilistic classifier of choice. 

For inference, we adapt an $A^*$-style algorithm, 
closely related to search methods used with probabilistic tree classifiers~\citep{Demb2012b,Dembczynski2016ConsistencyOP,Mena2016}. 
It uses a priority queue for storing visited nodes in the order of their decreasing conditional class probabilities. 
The queue is initialized with the root node (Algorithm~\ref{alg:ubop-hf-init}).
In the main loop, for each iteration, the next label is returned in order of decreasing conditional class probabilities. 
This is achieved by visiting nodes one by one, taking them from the queue and adding for each visited node its children to the queue (Algorithm~\ref{alg:ubop-hf-next}). 

On average, this search should result in significantly faster inference than the standard SVBOP-Full algorithm, as only a part of the tree will be explored. Optimistically, the speedup can be even exponential (i.e., the query for a single $\bx$ can proceed in logarithmic time in the number of classes $K$), 
Yet, in the worst case, the algorithm can visit all nodes in the tree, a number that is upper bounded by $2K-1$. With specific optimization algorithms, such as the ones used for hierarchical softmax implementations in deep neural networks, the hierarchical factorization might also lead to much faster training. One only needs to update nodes on a path from the root to a leaf corresponding to the class label of the example. This results in logarithmic training times in terms of the number of classes, assuming that the tree is balanced.

Similarly, as in the previous approach, there is an additional step required for building the hierarchical structure before training. This structure can be obtained from data. For example, 
Huffman trees are commonly used for similar algorithms in natural language processing problems~\citep{Mikolov_et_al_2013}.  More involved learning algorithms, such as the one used in~\citep{Prabhu_et_al_2018} run hierarchical balanced 2-means on class profiles. 
In some applications, a natural hierarchy may exist and this one can be used as well, as we will show in the experiments. 

\begin{algorithm}[t]
\begin{small}
\caption{SVBOP-HF.initPrediction -- \textbf{input:} $\bx$, $\calY = \{c_1, \ldots, c_K\}$}
\begin{algorithmic}[1] 
\State $\calQ = \emptyset$ \Comment{{\scriptsize Initialize a priority queue}}
\State $\calQ\mathrm{.add}((v_{\text{root}}, P(v_{\text{root}}|\bx))$ \Comment{{\scriptsize Add the tree root with the corresponding $P(v|\bx)$}}
\end{algorithmic}
\label{alg:ubop-hf-init}
\end{small}
\end{algorithm}

\begin{algorithm}[t]
\begin{small}
\caption{SVBOP-HF.getNextClass}
\begin{algorithmic}[1] 
\While{$\calQ \neq \emptyset$} \Comment{While the number of predicted labels is less than $k$}
	\State $(v, p_v) \leftarrow \calQ$.pop() \Comment{{\scriptsize Pop the node with highest $P(v|\bx)$ from the queue}}
	\If{$v$ is a leaf} \Comment{{\scriptsize If the node is a leaf node}} 
		\State \textbf{return} (Class($v$), $p_v$) \Comment{{\scriptsize Return corresponding class and $P(c|\bx)$}}
    \EndIf
	\For{$v' \in $ Children($v$)} \Comment{{\scriptsize Else for all child nodes of $v$}}
	    \State $p_{v'} \leftarrow p_v \times P(v'|v,\bx)$ \Comment{{\scriptsize Computed probability estimate to the child node}}
		\State $\calQ$.add(($v'$, , $\bx$))  \Comment{{\scriptsize Add the child node and its $P(v'|\bx)$ to the priority queue}}
	\EndFor	
\EndWhile
\end{algorithmic}
\label{alg:ubop-hf-next}
\end{small}
\end{algorithm}

\section{Related work}
\label{sec:relwork}

The paper that is the closest to our work is \citep{Delcoz2009LearningNC}, in which an efficient algorithm for the F$_\beta$-measure is proposed. Our work extends this paper in two directions: 1) we introduce a general optimization framework that generalizes the results of \citet{Delcoz2009LearningNC} to other utility functions, and 2) we also develop efficient algorithms that further improve their algorithm, which can be interpreted as a specific case of the SVBOP-Full algorithm.

We discussed several papers that introduce various set-based utility functions \citep{Corani2008NCC,Corani2009LNCC,Zaffalon2012EvaluatingCC,Yang2017b,Nguyen2018RelMCC,Ramaswamy2015CAMCRO}. Those papers mainly highlight the usefulness and properties of these functions, while focussing less on algorithmic aspects. From that perspective, we rather see our work as complementary instead of competing.     

In the literature, one can find several simple approaches to generate set-valued predictions. Arguably, the most simple approach is to look at the conditional class probabilities, and return a predefined number of $s$ classes; the classes with the highest conditional class probabilities (top-$s$ prediction). The main downside of this approach is that set-valued predictions for different instances will have the same cardinality. In practice this is often unwanted; small sets should be returned when the uncertainty about the true class label is small, while bigger sets are needed when the uncertainty becomes larger. Another simple approach is thresholding on conditional class probabilities. One can define a fixed threshold for $P(c \, \vert \, \bx)$ and return those classes that exceed this threshold, or one can define a  threshold on the cumulative probability of the resulting set. In the latter case, one first sorts the classes in decreasing order of conditional class probabilities. For a user-defined $\theta \in [0,1]$, one then returns the top-$s$ classes for which $s$ is given by
\begin{equation}
\label{eq:thresh}
\inf \Big\{s: \sum_{i=1}^{s} P(c^{(i)}\,|\,\bx) \geq \theta \Big\}\,,
\end{equation}
with $c^{(1)},\ldots,c^{(K)}$. Both thresholding approaches have a clear disadvantage: they only look at a specific threshold, and do not account for the fact that the size of the predicted set might considerably change if the threshold is slightly changed. Thresholding will be suboptimal in view of optimizing (\ref{eq:bayesoptimal}). This can be best observed from (\ref{eq:s}), which proves that the Bayes-optimal set not only depends on the sum of the first $s$ conditional probabilites, but also on the next probability in the sorted list, i.e., $p_{s+1}$. Nonetheless, top-$s$ prediction and thresholding are two obvious baselines that will be analyzed in our experiments. 

Set-valued predictions are also considered in hierarchical multi-class classification, mostly in the form of internal nodes of the hierarchy  \citep{Freitas_atutorial,Rangwala2017,Yang2017}. Compared to the ``flat'' multi-class case, the prediction space is thus restricted, because only sets of classes that correspond to nodes of the hierarchy can be returned as a prediction. In this paper, we do not consider such a setup, but the SVBOP-HF algorithm could be adjusted for that purpose. 

Set-based utility functions have been analyzed in the context of hierarchical multi-class classification. For example, \citet{Yang2017} evaluate various members of the $u_{\delta,\gamma}$ family in a framework where hierarchies are considered for computational reasons, while \citet{Oh2017TopKHC} optimizes recall by fixing $|\sety|$ as a user-defined parameter. Popular in hierarchical multi-label classification is the tree-distance loss, which could also be interpreted as a way of evaluating set-valued predictions (see e.g.\ \citep{Bi2015}). This loss is not a member of the family (\ref{eq:ufamily}), however. Besides, from the perspective of abstention in the case of uncertainty, it appears to be a less useful loss function, because it has the tendency to return large sets.  

Set-valued predictions are also produced in the framework of conformal prediction (CP) \citep{vovk_al,shaf_at08,bala_cp, Denis17ConfidenceSets}. This framework is rooted in statistical hypothesis testing, and in a sense can be seen as a ``frequentist statistics'' counterpart to our approach, which is more in the spirit of Bayesian decision theory. More specifically, given a new query instance $\bx$, CP constructs a prediction set or \emph{prediction region} $Y \subseteq \mathcal{Y}$ that is guaranteed to cover the true outcome $y$ with a pre-defined probability $1- \epsilon$ (for example 95\,\%). To this end, the hypothesis that $y = \hat{y}$ is tested for each candidate prediction $\hat{y} \in \mathcal{Y}$, and only those candidates for which the test can be rejected are excluded from $Y$. The test itself is non-parametric and leverages a ``nonconformity'' function $s: \, \mathcal{X} \times \mathcal{Y} \rightarrow \mathbb{R}$ that assigns scores $s(\vec{x}, y) \in \mathbb{R}$ to input/output tuples; the latter can be interpreted as a measure of ``strangeness'' of the pattern $(\bx, y)$, i.e., the higher the score, the less the data point $(\bx, y)$ conforms to what one would expect to observe. Roughly speaking, the hypothesis $y = \hat{y}$ is then rejected if the nonconformity score $s(\bx, \hat{y})$ is among the $\epsilon$\,\% highest of all scores $s(\bx_1, y_1) , \ldots , s(\vec{x}_N, y_N)$ observed in the (training) data so far. Conformal prediction has originally been introduced as an online learning method, but inductive variants have been developed as well \citep{papa_ic08}.

As we are maximizing a set-based utility function, our work is also somewhat connected to submodular optimization in machine learning \citep{Syed2016,Vondrak2019}. From (\ref{eq:topl}) it follows that $U(\sety,P,u)$ is either a submodular or supermodular function, when $g$ is concave or convex, respectively. In the first case, when $g$ is decreasing and concave, then $g(|\sety|)$ is submodular, and because $P(\sety \,|\, \bx)$ is modular, $U(\sety,P,u)$ must be submodular. One could therefore think of using submodular optimization algorithms to solve (\ref{eq:bayesoptimal}), but we believe that such algorithms would not be very useful. Theorem~\ref{thm:ksubsets} made us conclude that the combinatorial problem (\ref{eq:bayesoptimal}) could be reduced to a line search on $K+1$ sets. In contrast, a line search cannot be applied in combinatorial machine learning problems that are solved with submodular optimization techniques. To give one example, submodular optimization of (\ref{eq:bayesoptimal}) by means of a continuous relaxation via the Lovasz extension \citep{Vondrak2019} would still require the computation of $P(c \,|\,\bx)$ for all classes $c$, whereas the algorithms that we introduce avoid this. 

Currently, the development of efficient algorithms is an active theme of research in the area of extreme classification. The overwhelming majority of algorithms developed in this area focus on multi-label classification, but also some algorithms for multi-class problems, with a large number of classes, have been proposed. Such algorithms are not immediately applicable to the setting of set-valued prediction. Nevertheless, the idea of sorting probabilities is commonly used in extreme multi-label classification, for optimizing specific loss functions, such as the F$_{\beta}$-measure, precision, and normalized discounted cumulative gain, see e.g.\ \citep{Waegeman2014,Babbar2018}. For several of those measures, one typically either selects the top-$s$ scoring labels (with $s$ predefined), or those labels for which the marginal probability exceeds a threshold. Those two approaches are suboptimal in view of optimizing (\ref{eq:bayesoptimal}), but it is interesting to see how much we gain with our more complicated algorithms. That is something we will analyze in the experiments. 

Finally, set-valued prediction is also closely connected to uncertainty modelling for multi-class classification. For safety-critical applications such as self-driving cars and medical decision making, it is important to have an indication of uncertainty when making decisions. Some recently developed methods make a distinction between epistemic and aleatoric uncertainty, see e.g.\ \citep{hllermeier2019aleatoric} for an overview. The former type of uncertainty arises due to a lack of data for training, whereas the latter alludes to uncertainty that cannot be reduced by collecting more data, e.g.\ measurement noise, low-quality features, etc. In our framework, we only consider aleatoric uncertainty, because we produce a set based on the estimated conditional class probabilities $P(c\,|\, \bx)$. Approaches that analyze epistemic uncertainty take other properties of the data into account, such as generalizations of probability theory \citep{senge14,Nguyen2018RelMCC} or measures based on dropout resampling at test time \citep{KendallG17, DepewegHDU18}. Very recently, \citet{ziyin2019deep} combine the idea of set-based utility maximization with density estimation, producing an empty set in case of high epistemic uncertainty. This might be an interesting path for future work. 

\section{Experiments}
\label{sec:res:exps}

We perform three types of experiments to illustrate and benchmark the algorithms that we introduce. The datasets for all experiments are shown in Table~\ref{tab:empana:datasets}. In the following section, we first illustrate the practical relevance of set-valued prediction on both image and text classification datasets. In Section~\ref{sec:res:uti}, we evaluate the proposed exact algorithm, together with simple baseline methods that produce sets, for different set-based utility functions on a wide range of practical datasets. For the last group of experiments, in Section~\ref{sec:res:compalgo}, we compare the proposed exact and approximate algorithms by looking at runtime efficiency versus predictive performance. For a general discussion on the experimental setup, we refer the reader to the appendix.

\begin{table}[t]
\caption{\small Summary of image (top) and text (bottom) datasets used in all experiments. Notation: $K$ -- number of classes, $D$ -- number of features, $N$ -- number of samples }
\label{tab:empana:datasets}
\begin{center}
\begin{small}
\vskip -0in
\begin{tabular}{lrrrr}
\toprule
    \textbf{Dataset} & $\mathbf{K}$ & $\mathbf{D}$ & $\mathbf{N_{train}}$ & $\mathbf{N_{test}}$ \\
    \midrule
    \textbf{MNIST}~\citep{LeCunMNIST} & 10 & 32 & 33600 & 8400 \\
    \textbf{VOC 2006}~\citep{PascalVoc2006} & 10 & 25088 & 1398 & 1477 \\
    \textbf{VOC 2007}~\citep{PascalVoc2007} & 20 & 25088 & 2808 & 2841 \\
    \textbf{Caltech-101}~\citep{Caltech101} & 97 & 25088 & 4338 & 4339 \\
    \textbf{Caltech-256}~\citep{Caltech256} & 256 & 25088 & 14890 & 14890 \\
    \textbf{ALOI.BIN}~\citep{GeusebroekIJCV2005} & 1000 & 636911 & 90000 & 8000 \\
    \midrule
    \textbf{DBpedia}~\citep{DBpedia} & 219 & 483214 & 240942 & 96797 \\
    \textbf{Bacteria}~\citep{Bacteria} & 2631 & 2472 & 10576 & 2301 \\
    \textbf{Proteins}~\citep{LiWUXFLG18} & 3485 & 26276 & 11830 & 10179 \\
    \textbf{DMOZ}~\citep{PartalasKBAPGAA15} & 11939 & 833484 & 335068 & 38340 \\
    \textbf{LSHTC1}~\citep{PartalasKBAPGAA15} & 12166 & 381571 & 93805 & 34905 \\
    \bottomrule%
\end{tabular}%
\end{small}
\end{center}
\end{table}

\subsection{Illustrations on image datasets}
\label{sec:res:illsvp}
\begin{figure}[t]%
    \centering
    \subfloat[Predictions for three MNIST test samples, for increasing $\beta \in \{1,2,5\}$ and Shannon entropy $H$.]{\ubopmnisttable}\quad%
    \subfloat[VOC 2006 test sample with top-5 prediction {\it\{\textbf{sheep, \underline{cow}}, horse, car, motorbike\}}. SVBOP-Full candidates, for $\beta=1$, are indicated in bold.]{\includegraphics[width=0.3\columnwidth]{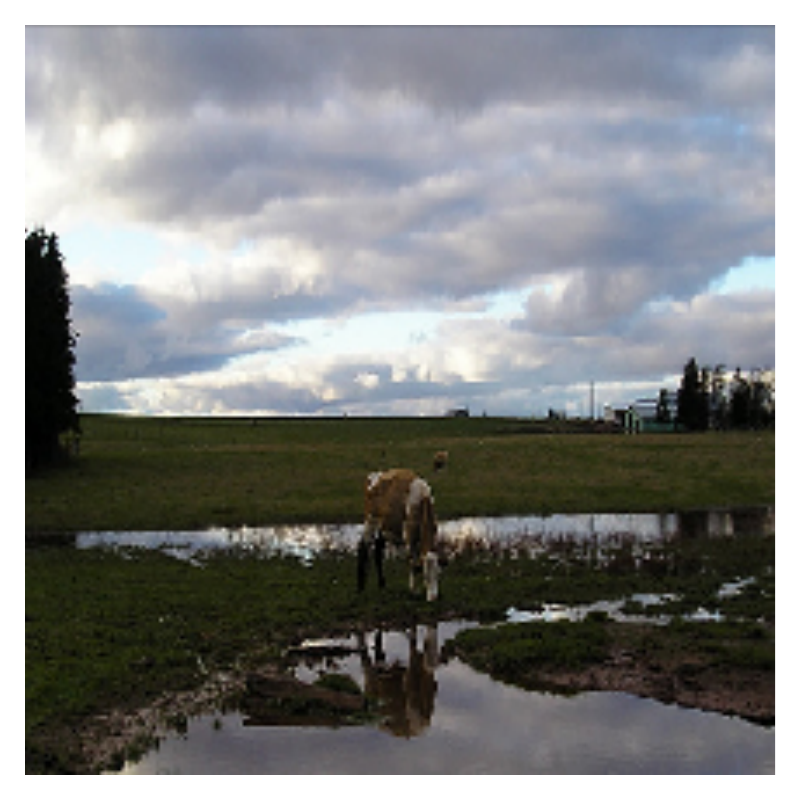}}\quad
    \subfloat[DBpedia test samples with corresponding top-5 predictions. SVBOP-Full candidates, for $\beta=1$, are indicated in bold.]{\ubopdbpediatable}\quad
    \caption{Set-valued predictions with SVBOP-Full and utility function $g_{F\beta}$ illustrated on MNIST, VOC 2006 and DBpedia. Ground truths are underlined in each prediction.}%
    \label{fig:ubopmnistvoc2006}%
\end{figure}
In the illustrative experiments we provide some examples that emphasize the practical usefulness of set-valued prediction. In Fig.~\ref{fig:ubopmnistvoc2006}(a) we show predictions obtained by the SVBOP-Full algorithm with utility function $g_{F\beta}$, on three MNIST test samples.  From left to right, we show three test instances for which the uncertainty (in the conditional class probabilities) is increasing. To this end, uncertainty is expressed by the Shannon entropy
$$H=\sum_{i=1}^K P(c_i\,|\,\bx)\log P(c_i\,|\,\bx)\,.$$ From top to down, we show predictions
for each image in function of decreasing $\beta\in\{5,2,1\}$. For increasing $\beta$ and uncertainty, the SVBOP-Full algorithm
typically predicts larger sets. For the last image, corresponding to number two, one can see
that predicting the class with the highest conditional class probability, i.e., the first element in the predicted
set\footnote{Note that the candidate solutions in the set are sorted in decreasing order of conditional class probability.}, would result in a false positive. However, for $\beta \in \{2,5\}$, the ground truth is returned as candidate solution in the set-valued predictions. 

We further illustrate the usefulness of set-valued prediction by looking at another image (VOC 2006) and a couple of text (DBpedia) examples in Fig.~\ref{fig:ubopmnistvoc2006}(b) and (c). There we show top-5 predictions and predictions obtained by SVBOP-Full (denoted in bold), by using utility function $g_{F1}$. For the VOC 2006 test image, with a cow and sheep in the background, the uncertainty reflected in the conditional class probabilities is high, most likely due to taking the picture in low light conditions. Again, returning the label with the highest conditional class probability (i.e., sheep) results in a false positive. A set-valued prediction by means of the simple top-5 or SVBOP-Full method, however, includes the ground truth as candidate solution for this particular case. A second observation is the suboptimality of the top-5 method, compared to SVBOP-Full. As the size of the prediction does not depend on the conditional class probabilities, but is rather fixed, we are at risk of including irrelevant candidates in the predicted set. These irrelevant candidates are most often characterized by low conditional class probabilities, for which an inclusion would result in a drop of expected utility in the SVBOP-Full Algorithm 
~(\ref{alg:ubop-framework}). For example, for the image of the cow in Fig.~\ref{fig:ubopmnistvoc2006}(b), two irrelevant candidates $\{car, motorbike\}$ are included in the top-5 prediction. The same can be observed for the examples in Fig.~\ref{fig:ubopmnistvoc2006}(c). 

\afterpage{\clearpage}
\begin{table}[t!]
\caption{\small Comparison of SVBOP-Full with baselines (thresholding, top-$s$ and inductive conformal prediction) in terms of optimizing different utility functions (listed in decreasing order of convexity: $u_{\delta=1.6,\gamma=0.6}, u_{F1}, u_{\delta=2.2,\gamma=1.2}$ and $u_{F5}$) for all datasets. Optimal utilities are underlined.}
\label{tab:comp:results}
\centering
\resizebox{0.85\columnwidth}{!}{%
\begin{tabular}{l|cccc|cccc}
\toprule
& \multicolumn{4}{c|}{\textbf{VOC 2006}} & \multicolumn{4}{c}{\textbf{VOC 2007}} \\
\textbf{Method} &  $\mathbf{u_{\delta=1.6,\gamma=0.6}}$ &  $\mathbf{u_{F1}}$ &  $\mathbf{u_{\delta=2.2,\gamma=1.2}}$ &  $\mathbf{u_{F5}}$ &  $\mathbf{u_{\delta=1.6,\gamma=0.6}}$ &  $\mathbf{u_{F1}}$ &  $\mathbf{u_{\delta=2.2,\gamma=1.2}}$ &  $\mathbf{u_{F5}}$ \\
\toprule
\textbf{SVBOP-Full-$\mathbf{u_{\delta=1.6,\gamma=0.6}}$} &                                 91.03 &              91.11 &                                 91.73 &              92.51 &                                 88.93 &              89.01 &                                 89.62 &              90.39 \\
\textbf{SVBOP-Full-$\mathbf{u_{F1}}$}                    &                                 91.13 &              91.22 &                                 91.90 &              92.82 &                                 88.84 &              88.94 &                                 89.64 &              90.62 \\
\textbf{SVBOP-Full-$\mathbf{u_{\delta=2.2,\gamma=1.2}}$} &                                 \underline{91.14} &              \underline{91.31} &                                 \underline{92.61} &              94.27 &                                 88.73 &              88.91 &                                 \underline{90.27} &              92.06 \\
\textbf{SVBOP-Full-$\mathbf{u_{F5}}$}                    &                                 87.76 &              88.35 &                                 90.74 &              \underline{97.12} &                                 83.60 &              84.29 &                                 86.80 &              \underline{94.74} \\
\midrule
\textbf{Threshold-$\mathbf{u_{\delta=1.6,\gamma=0.6}}$}  &                                 87.84 &              88.40 &                                 90.64 &              96.73 &                                 83.72 &              84.36 &                                 86.43 &              94.18 \\
\textbf{Threshold-$\mathbf{u_{F1}}$}                     &                                 87.84 &              88.40 &                                 90.64 &              96.73 &                                 83.72 &              84.36 &                                 86.43 &              94.18 \\
\textbf{Threshold-$\mathbf{u_{\delta=2.2,\gamma=1.2}}$}  &                                 87.84 &              88.40 &                                 90.64 &              96.73 &                                 83.72 &              84.36 &                                 86.43 &              94.18 \\
\textbf{Threshold-$\mathbf{u_{F5}}$}                     &                                 86.92 &              87.54 &                                 89.80 &              96.71 &                                 83.72 &              84.36 &                                 86.43 &              94.18 \\
\midrule
\textbf{Top-1}                                           &                                 90.72 &              90.72 &                                 90.72 &              90.72 &                                 \underline{88.95} &              88.95 &                                 88.95 &              88.95 \\
\textbf{Top-3}                                           &                                 45.91 &              49.19 &                                 59.03 &              91.35 &                                 44.94 &              48.15 &                                 57.78 &              89.43 \\
\textbf{Top-5}                                           &                                 29.46 &              33.18 &                                 39.01 &              86.26 &                                 29.03 &              32.69 &                                 38.44 &              84.99 \\
\midrule
\textbf{ICP$_{\mathbf{\epsilon=0.01}}$}                  &                                 87.06 &              87.64 &                                 90.09 &              96.23 &                                 70.44 &              71.67 &                                 74.65 &              92.45 \\
\textbf{ICP$_{\mathbf{\epsilon=0.10}}$}                  &                                 89.72 &              89.72 &                                 89.72 &              89.72 &                                 88.94 &              \underline{89.02} &                                 89.64 &              90.40 \\
\bottomrule
\end{tabular}
}\\
\vskip 0.02in
\resizebox{0.85\columnwidth}{!}{%
\begin{tabular}{l|cccc|cccc}
\toprule
& \multicolumn{4}{c|}{\textbf{Caltech-101}} & \multicolumn{4}{c}{\textbf{DBpedia}} \\
\textbf{Method} &  $\mathbf{u_{\delta=1.6,\gamma=0.6}}$ &  $\mathbf{u_{F1}}$ &  $\mathbf{u_{\delta=2.2,\gamma=1.2}}$ &  $\mathbf{u_{F5}}$ &  $\mathbf{u_{\delta=1.6,\gamma=0.6}}$ &  $\mathbf{u_{F1}}$ &  $\mathbf{u_{\delta=2.2,\gamma=1.2}}$ &  $\mathbf{u_{F5}}$ \\
\toprule
\textbf{SVBOP-Full-$\mathbf{u_{\delta=1.6,\gamma=0.6}}$} &                                 95.29 &              \underline{95.36} &                                 95.83 &              96.49 &                                 88.47 &              88.50 &                                 88.76 &              89.08 \\
\textbf{SVBOP-Full-$\mathbf{u_{F1}}$}                    &                                 95.24 &              95.33 &                                 \underline{95.84} &              96.67 &                                 88.49 &              88.52 &                                 88.82 &              89.18 \\
\textbf{SVBOP-Full-$\mathbf{u_{\delta=2.2,\gamma=1.2}}$} &                                 94.73 &              94.86 &                                 95.77 &              97.08 &                                 \underline{88.65} &              88.72 &                                 89.29 &              89.99 \\
\textbf{SVBOP-Full-$\mathbf{u_{F5}}$}                    &                                 87.89 &              88.41 &                                 90.10 &              96.93 &                                 88.57 &              \underline{88.76} &                                 90.07 &              91.91 \\
\midrule
\textbf{Threshold-$\mathbf{u_{\delta=1.6,\gamma=0.6}}$}  &                                 81.90 &              82.31 &                                 83.31 &              91.64 &                                 88.48 &              88.70 &                                 90.22 &              92.37 \\
\textbf{Threshold-$\mathbf{u_{F1}}$}                     &                                 81.90 &              82.31 &                                 83.31 &              91.64 &                                 88.48 &              88.70 &                                 90.22 &              92.37 \\
\textbf{Threshold-$\mathbf{u_{\delta=2.2,\gamma=1.2}}$}  &                                 81.90 &              82.31 &                                 83.31 &              91.64 &                                 88.09 &              88.38 &                                 \underline{90.30} &              93.16 \\
\textbf{Threshold-$\mathbf{u_{F5}}$}                     &                                 81.90 &              82.31 &                                 83.31 &              91.64 &                                 87.46 &              87.84 &                                 90.18 &              \underline{93.88} \\
\midrule
\textbf{Top-1}                                           &                                 95.07 &              95.07 &                                 95.07 &              95.07 &                                 88.22 &              88.22 &                                 88.22 &              88.22 \\
\textbf{Top-3}                                           &                                 46.02 &              49.31 &                                 59.17 &              91.57 &                                 45.48 &              48.73 &                                 58.47 &              90.49 \\
\textbf{Top-5}                                           &                                 29.37 &              33.08 &                                 38.90 &              86.01 &                                 29.22 &              32.91 &                                 38.70 &              85.56 \\
\midrule
\textbf{ICP$_{\mathbf{\epsilon=0.01}}$}                  &                                 89.31 &              89.83 &                                 91.64 &              \underline{97.69} &                                 65.04 &              66.39 &                                 72.30 &              89.84 \\
\textbf{ICP$_{\mathbf{\epsilon=0.10}}$}                  &                                 \underline{95.34} &              95.34 &                                 95.34 &              95.34 &                                 88.04 &              88.11 &                                 88.66 &              89.35 \\
\bottomrule
\end{tabular}
}\\
\vskip 0.02in
\resizebox{0.85\columnwidth}{!}{%
\begin{tabular}{l|cccc|cccc}
\toprule
& \multicolumn{4}{c|}{\textbf{Caltech-256}} & \multicolumn{4}{c}{\textbf{ALOI.BIN}} \\
\textbf{Method} &  $\mathbf{u_{\delta=1.6,\gamma=0.6}}$ &  $\mathbf{u_{F1}}$ &  $\mathbf{u_{\delta=2.2,\gamma=1.2}}$ &  $\mathbf{u_{F5}}$ &  $\mathbf{u_{\delta=1.6,\gamma=0.6}}$ &  $\mathbf{u_{F1}}$ &  $\mathbf{u_{\delta=2.2,\gamma=1.2}}$ &  $\mathbf{u_{F5}}$ \\
\toprule
\textbf{SVBOP-Full-$\mathbf{u_{\delta=1.6,\gamma=0.6}}$} &                                 81.90 &              82.04 &                                 83.06 &              84.46 &                                 96.38 &              96.41 &                                 96.66 &              96.97 \\
\textbf{SVBOP-Full-$\mathbf{u_{F1}}$}                    &                                 \underline{81.91} &              \underline{82.09} &                                 83.21 &              85.02 &                                 96.34 &              96.38 &                                 96.65 &              97.02 \\
\textbf{SVBOP-Full-$\mathbf{u_{\delta=2.2,\gamma=1.2}}$} &                                 80.97 &              81.27 &                                 \underline{83.39} &              86.39 &                                 96.25 &              96.33 &                                 \underline{96.86} &              97.56 \\
\textbf{SVBOP-Full-$\mathbf{u_{F5}}$}                    &                                 69.47 &              70.60 &                                 73.74 &              \underline{89.06} &                                 94.04 &              94.28 &                                 95.23 &              \underline{98.04} \\
\midrule
\textbf{Threshold-$\mathbf{u_{\delta=1.6,\gamma=0.6}}$}  &                                 69.73 &              70.48 &                                 72.28 &              85.59 &                                 89.56 &              89.80 &                                 90.54 &              94.14 \\
\textbf{Threshold-$\mathbf{u_{F1}}$}                     &                                 69.73 &              70.48 &                                 72.28 &              85.59 &                                 89.56 &              89.80 &                                 90.54 &              94.14 \\
\textbf{Threshold-$\mathbf{u_{\delta=2.2,\gamma=1.2}}$}  &                                 69.73 &              70.48 &                                 72.28 &              85.59 &                                 89.56 &              89.80 &                                 90.54 &              94.14 \\
\textbf{Threshold-$\mathbf{u_{F5}}$}                     &                                 69.73 &              70.48 &                                 72.28 &              85.59 &                                 89.56 &              89.80 &                                 90.54 &              94.14 \\
\midrule
\textbf{Top-1}                                           &                                 81.46 &              81.46 &                                 81.46 &              81.46 &                                 \underline{96.43} &              \underline{96.43} &                                 96.43 &              96.43 \\
\textbf{Top-3}                                           &                                 42.52 &              45.56 &                                 54.67 &              84.61 &                                 46.10 &              49.39 &                                 59.26 &              91.72 \\
\textbf{Top-5}                                           &                                 27.77 &              31.27 &                                 36.78 &              81.31 &                                 29.38 &              33.09 &                                 38.91 &              86.03 \\
\midrule
\textbf{ICP$_{\mathbf{\epsilon=0.01}}$}                  &                                 45.43 &              46.77 &                                 49.49 &              75.23 &                                 93.33 &              93.62 &                                 94.71 &              98.00 \\
\textbf{ICP$_{\mathbf{\epsilon=0.10}}$}                  &                                 77.10 &              77.77 &                                 80.95 &              87.80 &                                 96.18 &              96.18 &                                 96.18 &              96.18 \\
\bottomrule
\end{tabular}
}\\
\vskip 0.02in
\resizebox{0.85\columnwidth}{!}{%
\begin{tabular}{l|cccc|cccc}
\toprule
& \multicolumn{4}{c|}{\textbf{Bacteria}} & \multicolumn{4}{c}{\textbf{Proteins}} \\
\textbf{Method} &  $\mathbf{u_{\delta=1.6,\gamma=0.6}}$ &  $\mathbf{u_{F1}}$ &  $\mathbf{u_{\delta=2.2,\gamma=1.2}}$ &  $\mathbf{u_{F5}}$ &  $\mathbf{u_{\delta=1.6,\gamma=0.6}}$ &  $\mathbf{u_{F1}}$ &  $\mathbf{u_{\delta=2.2,\gamma=1.2}}$ &  $\mathbf{u_{F5}}$ \\
\toprule
\textbf{SVBOP-Full-$\mathbf{u_{\delta=1.6,\gamma=0.6}}$} &                                 92.91 &              93.00 &                                 93.61 &              94.45 &                                 70.52 &              70.56 &                                 70.91 &              71.33 \\
\textbf{SVBOP-Full-$\mathbf{u_{F1}}$}                    &                                 92.86 &              92.98 &                                 \underline{93.61} &              94.77 &                                 70.54 &              70.59 &                                 70.97 &              71.44 \\
\textbf{SVBOP-Full-$\mathbf{u_{\delta=2.2,\gamma=1.2}}$} &                                 91.37 &              91.57 &                                 92.85 &              94.81 &                                 70.75 &              70.85 &                                 71.69 &              72.71 \\
\textbf{SVBOP-Full-$\mathbf{u_{F5}}$}                    &                                 84.28 &              85.01 &                                 88.67 &              96.12 &                                 71.39 &              71.66 &                                 73.66 &              76.32 \\
\midrule
\textbf{Threshold-$\mathbf{u_{\delta=1.6,\gamma=0.6}}$}  &                                 88.80 &              89.28 &                                 91.53 &              96.41 &                                 71.41 &              71.76 &                                 74.21 &              77.58 \\
\textbf{Threshold-$\mathbf{u_{F1}}$}                     &                                 88.80 &              89.28 &                                 91.53 &              96.41 &                                 71.41 &              71.76 &                                 74.21 &              77.58 \\
\textbf{Threshold-$\mathbf{u_{\delta=2.2,\gamma=1.2}}$}  &                                 88.80 &              89.28 &                                 91.53 &              96.41 &                                 \underline{71.42} &              \underline{71.90} &                                 \underline{75.19} &              79.93 \\
\textbf{Threshold-$\mathbf{u_{F5}}$}                     &                                 88.80 &              89.28 &                                 91.53 &              \underline{96.41} &                                 71.42 &              71.90 &                                 75.19 &              79.93 \\
\midrule
\textbf{Top-1}                                           &                                 93.16 &              93.16 &                                 93.16 &              93.16 &                                 70.24 &              70.24 &                                 70.24 &              70.24 \\
\textbf{Top-3}                                           &                                 45.83 &              49.11 &                                 58.93 &              91.20 &                                 42.31 &              45.33 &                                 54.40 &              84.19 \\
\textbf{Top-5}                                           &                                 29.33 &              33.03 &                                 38.84 &              85.87 &                                 27.69 &              31.18 &                                 36.67 &              81.07 \\
\midrule
\textbf{ICP$_{\mathbf{\epsilon=0.01}}$}                  &                                 50.68 &              53.05 &                                 63.38 &              91.09 &                                 34.78 &              36.19 &                                 43.03 &              60.80 \\
\textbf{ICP$_{\mathbf{\epsilon=0.10}}$}                  &                                 \underline{93.46} &              \underline{93.46} &                                 93.46 &              93.46 &                                 64.12 &              65.48 &                                 72.32 &              \underline{86.28} \\
\bottomrule
\end{tabular}
}
\\
\vskip 0.02in
\resizebox{0.85\columnwidth}{!}{%
\begin{tabular}{l|cccc|cccc}
\toprule
& \multicolumn{4}{c|}{\textbf{DMOZ}} & \multicolumn{4}{c}{\textbf{LSHTC1}} \\
\textbf{Method} &  $\mathbf{u_{\delta=1.6,\gamma=0.6}}$ &  $\mathbf{u_{F1}}$ &  $\mathbf{u_{\delta=2.2,\gamma=1.2}}$ &  $\mathbf{u_{F5}}$ &  $\mathbf{u_{\delta=1.6,\gamma=0.6}}$ &  $\mathbf{u_{F1}}$ &  $\mathbf{u_{\delta=2.2,\gamma=1.2}}$ &  $\mathbf{u_{F5}}$ \\
\midrule
\textbf{SVBOP-Full-$\mathbf{u_{\delta=1.6,\gamma=0.6}}$} &                                 \underline{41.14} &              \underline{41.48} &                                 \underline{43.35} &              46.85 &                                 42.61 &              42.72 &                                 43.52 &              44.60 \\
\textbf{SVBOP-Full-$\mathbf{u_{F1}}$}                    &                                 40.43 &              40.91 &                                 42.83 &              48.28 &                                 42.63 &              42.78 &                                 43.66 &              45.06 \\
\textbf{SVBOP-Full-$\mathbf{u_{\delta=2.2,\gamma=1.2}}$} &                                 39.51 &              40.14 &                                 43.31 &              49.80 &                                 \underline{42.65} &              \underline{42.88} &                                 \underline{44.49} &              46.75 \\
\textbf{SVBOP-Full-$\mathbf{u_{F5}}$}                    &                                 19.14 &              19.97 &                                 21.42 &              39.76 &                                 39.14 &              40.01 &                                 42.46 &              54.02 \\
\midrule
\textbf{Threshold-$\mathbf{u_{\delta=1.6,\gamma=0.6}}$}  &                                 10.73 &              10.88 &                                 11.06 &              17.54 &                                 35.92 &              36.18 &                                 37.01 &              41.36 \\
\textbf{Threshold-$\mathbf{u_{F1}}$}                     &                                 10.73 &              10.88 &                                 11.06 &              17.54 &                                 35.92 &              36.18 &                                 37.01 &              41.36 \\
\textbf{Threshold-$\mathbf{u_{\delta=2.2,\gamma=1.2}}$}  &                                 10.73 &              10.88 &                                 11.06 &              17.54 &                                 35.92 &              36.18 &                                 37.01 &              41.36 \\
\textbf{Threshold-$\mathbf{u_{F5}}$}                     &                                 10.73 &              10.88 &                                 11.06 &              17.54 &                                 35.92 &              36.18 &                                 37.01 &              41.36 \\
\midrule
\textbf{Top-1}                                           &                                 40.41 &              40.41 &                                 40.41 &              40.41 &                                 42.00 &              42.00 &                                 42.00 &              42.00 \\
\textbf{Top-3}                                           &                                 25.99 &              27.84 &                                 33.41 &              51.71 &                                 27.13 &              29.06 &                                 34.88 &              53.98 \\
\textbf{Top-5}                                           &                                 18.27 &              20.57 &                                 24.19 &              \underline{53.49} &                                 18.96 &              21.35 &                                 25.10 &              \underline{55.50} \\
\midrule
\textbf{ICP$_{\mathbf{\epsilon=0.01}}$}                  &                                  1.39 &               1.45 &                                  1.57 &               2.99 &                                  2.46 &               2.61 &                                  2.84 &               6.32 \\
\textbf{ICP$_{\mathbf{\epsilon=0.10}}$}                  &                                  2.55 &               2.82 &                                  3.10 &              13.30 &                                 14.46 &              15.09 &                                 16.30 &              29.59 \\
\bottomrule
\end{tabular}
}
\end{table}

\subsection{Comparison of different utility functions and baselines}
\label{sec:res:uti}

The goal of the second type of experiments is to evaluate the SVBOP-Full algorithm for several utility functions. Two different parameterizations of the $u_{\delta,\gamma}$ and $u_{F\beta}$ families are studied, leading to four utility functions in total. We benchmark the SVBOP-Full algorithm against several baselines (which are all described in the related work section): 
\begin{enumerate}
\item Thresholding: predicting those classes for which the total cumulative probability mass exceeds a user-defined threshold, as explained by (\ref{eq:thresh}). For each of the four utility functions we tune the threshold on a validation set, by considering ten equally-spaced values. As a result, we obtain four different thresholding strategies, each of them tailored for a specific utility function.
\item Top-$s$: predicting a set consisting of the $s$ classes with highest conditional class probabilities. Here we consider three versions, with $s\in\{1, 3, 5\}$. As discussed in the related work section, top-$s$ returns a fixed number of classes for each instance, which can be considered as a limitation. However, it is interesting to see how this suboptimal approach performs w.r.t.\ set-based utility functions.  
\item Inductive conformal prediction (ICP): we experiment with the commonly-used nonconformity function $s(\bx, y) = 1-P(y\,|\,\bx)\,,$
and consider two fixed significance levels $\epsilon\in\{0.01, 0.10\}$. 
\end{enumerate} 

Table~\ref{tab:comp:results} shows for all methods the results obtained on test sets, where the highest obtained utilities are underlined. The utility functions are ordered in decreasing order of convexity: $u_{\delta=1.6,\gamma=0.6}, u_{F1}, u_{\delta=2.2,\gamma=1.2}$ and $u_{F5}$. The first three utility functions all behave very similar to precision, which explains why the results are similar. Due to a higher convexity, these utility functions give a high reward to small sets, such that the top-1 in general yields very good results for those utility functions. At the other side, for $u_{F5}$ the picture looks different; there top-3 or top-5 are often much better than top-1, because this utility function gives a higher reward to larger sets. 

The performance of the SVBOP-Full algorithm is in accordance with our theoretical results. In general, it is one of the best-performing methods for all datasets and utility functions that were analyzed. However, the differences with the other methods are small. This is of course not very surprising, because all tested inference algorithms depart from the same conditional class probabilities. Differences in performance can only be attributed to (relatively small) differences in the inference algorithms.  As discussed in Section~\ref{sec:relwork}, thresholding is not Bayes-optimal w.r.t.\ (\ref{eq:bayesoptimal}), but on the analyzed datasets it performs quite well. We can conclude that the theoretical shortcomings of thresholding will only lead to small performance drops in practice. 

Finally, inductive conformal prediction performs quite well on some datasets, but this method yields bad results on other datasets. This can of course be explained by the fact that inductive conformal prediction does not intend to maximize a utility function. As explained in Section~\ref{sec:relwork}, this method rather intends to return sets that contain the true class label with high confidence. This phenomenon is especially visible on the LSHTC and DMOZ datasets, where $K$ is large. Then, inductive conformal prediction will produces very large sets, when it wants to cover the true class with high probability. 

\subsection{Comparison of exact and approximate algorithms on large datasets}
\label{sec:res:compalgo}

In the final group of experiments, we would like to compare the proposed exact and approximate algorithms by looking at runtime efficiency versus predictive performance. Table~\ref{tab:empana:results} summarizes the results for the SVBOP-Full, SVBOP-HSG, and SVBOP-HF approaches, obtained on the five largest datasets (w.r.t. the number of classes). We use the same weights for SVBOP-Full and SVBOP-HSG algorithms. For SVBOP-HF, we consider a predefined hierarchy, if available, and a hierarchy constructed during training by means of hierarhical balanced 2-means on class profiles, as explained in Section~\ref{sec:algorithmic_solutions}. For each algorithm we optimize two different utility functions: $u_{F1}$ and $u_{\delta=2.2, \gamma=1.2}$. We report train and test time, as well as average utility, recall ($\mathbbm{1}_{y_{i}\in\sety(\bx_{i})}$) and size of the predicted sets. Additionally, we also include average recall for top-1 predictions; this is in essence accuracy.  

For almost all datasets, SVBOP-Full yields the best predictive performance while being, as expected, always the slowest. 
For all datasets, SVBOP-HSG  achieves a predictive performance that is very close to SVBOP-Full, while being at the same time even a few times faster in inference on DMOZ and LSTHC1 datasets. 
Unsurprisingly, hierarchical factorization leads to the highest speedup both in training and inference. However, for most datasets, it comes at the expense of predictive performance. Only for datasets where a meaningful natural hierarchy is given (i.e. biological datasets), SVBOP-HF$_p$ outperforms SVBOP-Full and SVBOP-HF$_c$. 

In general, we observe that for almost all datasets, both approximate algorithms behave similarly to SVBOP-Full and manage to significantly improve recall with an only small increase in average prediction size. At the same time, the approximate algorithms improve the test times at the cost of predictive performance. This is not very surprising, as one might expect a clear trade-off between the two. In practice, the choice of a particular method should depend on the desired trade-off between runtime and predictive performance. 

\begin{table}[t]
    \caption{\small Performance versus runtime for the SVBOP-Full, SVBOP-HSG and SVBOP-HF algorithms, tested on five benchmark datasets for F1-measure utility ($u_{F1}$) and credal utility with $\delta = 2.2$ and $\gamma = 1.2$ ($u_{\delta,\gamma}$). Notation: R -- recall, $u$ -- utility value, $|\sety|$ -- prediction size, $t_{train}$ -- CPU train time in seconds, $t_{test}$ -- CPU test time in milliseconds / number of test samples, $p$ -- predefined hierarchy,  $c$ -- hierarchy built with hierarchical balanced 2-means clustering}
    \label{tab:empana:results}
    \resizebox{\textwidth}{!}{
    \begingroup
    \hspace{-8pt}
    \setlength{\tabcolsep}{3pt}
    \begin{tabular}{l|l|r|c|cccr|cccr}
        \toprule
        \textbf{Dataset} & \textbf{Algo.} & $\mathbf{t_{train}}$ & \textbf{Top-1} &
        $\mathbf{u_{F1}}$ & \textbf{R} & $|\sety|$ & $\mathbf{t_{test}}$ &
        $\mathbf{u_{\delta,\gamma}}$ & \textbf{R} & $|\sety|$ & $\mathbf{t_{test}}$ \\
        \midrule

        \textbf{ALOI.BIN} & Full & 5065 & 96.43 & 96.38 & 97.11 & 1.03 & 4.89 & 96.86 & 97.68 & 1.05 & 4.74 \\
        & HSG & 5087 & 96.41 & 96.23 & 96.96 & 1.04 & 3.09 & 96.66 & 97.54 & 1.05 & 3.11 \\
        & HF$_c$ & 163 & 93.15 & 93.46 & 95.16 & 1.06 & 0.29 & 93.97 & 95.44 & 1.10 & 0.27  \\
        \midrule

        \textbf{Bacteria} & Full & 6303 & 93.16 & 92.98 & 94.55 & 1.14 & 5.02 & 92.85 & 94.64 & 1.18 & 4.61 \\
        & HSG & 6323 & 92.45 & 93.16 & 94.11 & 1.09 & 1.83 & 93.58 & 94.58 & 1.12 & 1.92 \\
        & HF$_p$ & 360 & 91.19 & 91.38 & 92.97 & 1.06 & 0.18 & 91.42 & 92.98 & 1.09 & 0.20 \\
        & HF$_c$ & 70 & 90.72 & 91.14 & 92.58 & 1.06 & 0.08 & 91.08 & 92.59 & 1.09 & 0.09 \\
        \midrule

        \textbf{Proteins} & Full & 2192 & 70.24 & 70.59 & 70.98 & 1.31 & 15.53 & 71.69 & 71.73 & 1.36 & 14.28 \\
        & HSG & 2672 & 69.58 & 69.83 & 70.61 & 1.22 & 10.95 & 69.46 & 70.38 & 1.27 & 11.35 \\
        & HF$_p$ & 77 & 81.59 & 81.80 & 82.54 & 1.03 & 0.48 & 82.21 & 82.57 & 1.05 & 0.43 \\
        & HF$_c$ & 22 & 79.73 & 79.95 & 80.92 & 1.04 & 0.17 & 80.10 & 80.67 & 1.07 & 0.17 \\
        \midrule

        \textbf{DMOZ} & Full & 82872 & 40.41 & 40.91 & 51.33 & 3.52 & 55.04 & 43.31 & 52.46 & 2.73 & 54.47 \\
        & HSG & 83181 & 39.97 & 40.13 & 49.66 & 3.26 & 2.67 & 42.02 & 50.05 & 2.36 & 2.72 \\
        & HF$_c$ & 722 & 38.03 & 22.79 & 46.70 & 7.15 & 11.94 & 25.79 & 45.44 & 5.32 & 11.01 \\
        \midrule

        \textbf{LSHTC1} & Full & 71509 & 42.00 & 42.78 & 45.38 & 1.29 & 46.13 & 44.49 & 47.24 & 1.44 & 48.37 \\
        & HSG & 72361 & 41.52 & 42.30 & 44.86 & 1.30 & 8.28 & 43.99 & 46.71 & 1.44 & 9.71 \\
        & HF$_p$ & 557 & 39.82 & 40.96 & 44.79 & 1.42 & 0.52 & 43.20 & 47.21 & 1.60 & 0.60 \\
        & HF$_c$ & 338 & 38.53 & 39.19 & 43.36 & 1.48 & 1.15 & 41.38 & 45.78 & 1.66 & 1.24 \\
        \bottomrule%
    \end{tabular}%
    \endgroup%
    }%
\end{table}

\section{Conclusion}
We introduced a decision-theoretic framework for a general family of set-based utility functions, including most of the measures used in the literature so far, and developed three Bayes-optimal inference algorithms that exploit specific assumptions to improve runtime efficiency. 
Depending on the concrete dataset, those assumptions may or may not affect  predictive performance.

In future work, we plan to extend our decision-theoretic framework toward uncertainty representations more general than standard probability, for example taking up a distinction between so-called aleatoric and epistemic uncertainty recently put forward by several authors \citep{senge14,KendallG17, DepewegHDU18, Nguyen2018RelMCC}.

\bibliographystyle{spbasic}     
\bibliography{main}

\begin{thebibliography}{62}
\providecommand{\natexlab}[1]{#1}
\providecommand{\url}[1]{{#1}}
\providecommand{\urlprefix}{URL }
\expandafter\ifx\csname urlstyle\endcsname\relax
  \providecommand{\doi}[1]{DOI~\discretionary{}{}{}#1}\else
  \providecommand{\doi}{DOI~\discretionary{}{}{}\begingroup
  \urlstyle{rm}\Url}\fi
\providecommand{\eprint}[2][]{\url{#2}}

\bibitem[{Alex~Freitas(2007)}]{Freitas_atutorial}
Alex~Freitas AdC (2007) A tutorial on hierarchical classification with
  applications in bioinformatics. In: Research and Trends in Data Mining
  Technologies and Applications,, pp 175--208

\bibitem[{Babbar and Dembczy{\'n}ski(2018)}]{Babbar2018}
Babbar R, Dembczy{\'n}ski K (2018) Extreme classification for information
  retrieval. Tutorial at ECIR 2018,
  http://www.cs.put.poznan.pl/kdembczynski/xmlc-tutorial-ecir-2018/xmlc4ir-2018.pdf

\bibitem[{Babbar and Sch\"olkopf(2017)}]{Babbar_at_el_2017}
Babbar R, Sch\"olkopf B (2017) Dismec: Distributed sparse machines for extreme
  multi-label classification. In: Proceedings of the Tenth ACM International
  Conference on Web Search and Data Mining, \doi{10.1145/3018661.3018741}

\bibitem[{Balasubramanian et~al.(2014)Balasubramanian, Ho, and Vovk}]{bala_cp}
Balasubramanian V, Ho S, Vovk V (eds)  (2014) Conformal Prediction for Reliable
  Machine Learning: Theory, Adaptations and Applications. Morgan Kaufmann

\bibitem[{Beygelzimer et~al.(2009)Beygelzimer, Langford, Lifshits, Sorkin, and
  Strehl}]{Beygelzimer2009CondPT}
Beygelzimer A, Langford J, Lifshits Y, Sorkin G, Strehl A (2009) Conditional
  probability tree estimation analysis and algorithms. In: Proceedings of the
  Twenty-Fifth Conference on Uncertainty in Artificial Intelligence, AUAI
  Press, Arlington, Virginia, United States, UAI '09, pp 51--58

\bibitem[{Bi and Kwok(2015)}]{Bi2015}
Bi W, Kwok J (2015) Bayes-optimal hierarchical multilabel classification. IEEE
  Transactions on Knowledge and Data Engineering 27:1--1

\bibitem[{Corani and Zaffalon(2008)}]{Corani2008NCC}
Corani G, Zaffalon M (2008) Learning reliable classifiers from small or
  incomplete data sets: The naive credal classifier 2. Journal of Machine
  Learning Research 9:581--621

\bibitem[{Corani and Zaffalon(2009)}]{Corani2009LNCC}
Corani G, Zaffalon M (2009) Lazy naive credal classifier. In: Proceedings of
  the 1st ACM SIGKDD Workshop on Knowledge Discovery from Uncertain Data, ACM,
  pp 30--37

\bibitem[{{Del Coz} et~al.(2009){Del Coz}, D{\'i}ez, and
  Bahamonde}]{Delcoz2009LearningNC}
{Del Coz} JJ, D{\'i}ez J, Bahamonde A (2009) Learning nondeterministic
  classifiers. The Journal of Machine Learning Research 10:2273--2293

\bibitem[{Dembczy{\'n}ski et~al.(2012)Dembczy{\'n}ski, Waegeman, Cheng, and
  H\"ullermeier}]{Demb2012b}
Dembczy{\'n}ski K, Waegeman W, Cheng W, H\"ullermeier E (2012) An analysis of
  chaining in multi-label classification. In: Proceedings of the European
  Conference on Artificial Intelligence

\bibitem[{Dembczy{\'n}ski et~al.(2016)Dembczy{\'n}ski, Kot\l{}owski, Waegeman,
  Busa-Fekete, and H{\"u}llermeier}]{Dembczynski2016ConsistencyOP}
Dembczy{\'n}ski K, Kot\l{}owski W, Waegeman W, Busa-Fekete R, H{\"u}llermeier E
  (2016) Consistency of probabilistic classifier trees. In: ECML/PKDD

\bibitem[{Denis and Hebiri(2017)}]{Denis17ConfidenceSets}
Denis C, Hebiri M (2017) Confidence sets with expected sizes for multiclass
  classification. J Mach Learn Res 18:102:1--102:28

\bibitem[{Depeweg et~al.(2018)Depeweg, Hern{\'a}ndez{-}Lobato, Doshi{-}Velez,
  and Udluft}]{DepewegHDU18}
Depeweg S, Hern{\'a}ndez{-}Lobato JM, Doshi{-}Velez F, Udluft S (2018)
  Decomposition of uncertainty in bayesian deep learning for efficient and
  risk-sensitive learning. In: {ICML}, {PMLR}, Proceedings of Machine Learning
  Research, vol~80, pp 1192--1201

\bibitem[{Everingham et~al.(2006)Everingham, Eslami, Gool, Williams, Winn, and
  Zisserman}]{PascalVoc2006}
Everingham M, Eslami ASM, Gool LV, Williams CKI, Winn J, Zisserman A (2006) The
  {PASCAL} {V}isual {O}bject {C}lasses {C}hallenge 2006 {(VOC2006)} {R}esults

\bibitem[{Everingham et~al.(2007)Everingham, Gool, Williams, Winn, and
  Zisserman}]{PascalVoc2007}
Everingham M, Gool LV, Williams CKI, Winn J, Zisserman A (2007) The {PASCAL}
  {V}isual {O}bject {C}lasses {C}hallenge 2007 {(VOC2007)} {R}esults

\bibitem[{Fan et~al.(2008)Fan, Chang, Hsieh, Wang, and Lin}]{REF08a}
Fan RE, Chang KW, Hsieh CJ, Wang XR, Lin CJ (2008) {LIBLINEAR}: A library for
  large linear classification. Journal of Machine Learning Research
  9:1871--1874

\bibitem[{Fiannaca et~al.(2018)Fiannaca, Paglia, Rosa, Bosco, Renda, Rizzo,
  Gaglio, and Urso}]{FiannacaPRBRRGU18}
Fiannaca A, Paglia LL, Rosa ML, Bosco GL, Renda G, Rizzo R, Gaglio S, Urso A
  (2018) Deep learning models for bacteria taxonomic classification of
  metagenomic data. BMC Bioinformatics 19-S(7):61--76

\bibitem[{Fox(1997)}]{Fox_1997}
Fox J (1997) Applied regression analysis, linear models, and related methods.
  Sage

\bibitem[{Frank and Kramer(2004)}]{Frank2004NestedD}
Frank E, Kramer S (2004) Ensembles of nested dichotomies for multi-class
  problems. In: Proceedings of the Twenty-first International Conference on
  Machine Learning, ACM, New York, NY, USA, ICML '04, pp 39--

\bibitem[{Geusebroek et~al.(2005)Geusebroek, Burghouts, and
  Smeulders}]{GeusebroekIJCV2005}
Geusebroek JM, Burghouts G, Smeulders A (2005) The amsterdam library of object
  images. International Journal of Computer Vision 61(1):103--112

\bibitem[{Griffin et~al.(2007)Griffin, Holub, and Perona}]{Caltech256}
Griffin G, Holub A, Perona P (2007) Caltech-256 object category dataset. Tech.
  Rep. 7694, California Institute of Technology

\bibitem[{H{\"u}llermeier and Waegeman(2019)}]{hllermeier2019aleatoric}
H{\"u}llermeier E, Waegeman W (2019) Aleatoric and epistemic uncertainty in
  machine learning: A tutorial introduction. \eprint{1910.09457}

\bibitem[{Jansche(2007)}]{jansche07}
Jansche M (2007) A maximum expected utility framework for binary sequence
  labeling. In: Association for Computational Linguistics, pp 736--743

\bibitem[{Johnson et~al.(2017)Johnson, Douze, and
  J{\'e}gou}]{Johnson_et_al_2017}
Johnson J, Douze M, J{\'e}gou H (2017) Billion-scale similarity search with
  gpus. arXiv preprint arXiv:170208734

\bibitem[{Kendall and Gal(2017)}]{KendallG17}
Kendall A, Gal Y (2017) What uncertainties do we need in bayesian deep learning
  for computer vision? In: Advances in Neural Information Processing Systems
  30: Annual Conference on Neural Information Processing Systems 2017, 4-9
  December 2017, Long Beach, CA, {USA}, pp 5580--5590

\bibitem[{LeCun and Cortes(2010)}]{LeCunMNIST}
LeCun Y, Cortes C (2010) {MNIST} handwritten digit database. Tech. rep.,
  Courant Institute, Google Labs,
  \urlprefix\url{http://yann.lecun.com/exdb/mnist/}

\bibitem[{Li et~al.(2003)Li, Andreetto, and Ranzato}]{Caltech101}
Li FF, Andreetto M, Ranzato MA (2003) Caltech101 image dataset. Tech. rep.,
  California Institute of Technology

\bibitem[{Li et~al.(2018)Li, Wang, Umarov, Xie, Fan, Li, and Gao}]{LiWUXFLG18}
Li Y, Wang S, Umarov R, Xie B, Fan M, Li L, Gao X (2018) Deepre: sequence-based
  enzyme {EC} number prediction by deep learning. Bioinformatics 34(5):760--769

\bibitem[{{Malkov} and {Yashunin}(2018)}]{Malkov_and_Yashunin_2018}
{Malkov} YA, {Yashunin} DA (2018) Efficient and robust approximate nearest
  neighbor search using hierarchical navigable small world graphs. IEEE
  Transactions on Pattern Analysis and Machine Intelligence pp 1--1

\bibitem[{Melnikov and H{\"{u}}llermeier(2018)}]{melnikov2018}
Melnikov V, H{\"{u}}llermeier E (2018) On the effectiveness of heuristics for
  learning nested dichotomies: an empirical analysis. Machine Learning
  107(8--10):1537--1560

\bibitem[{Mena et~al.(2017)Mena, Monta{\~n}{\'e}s, Quevedo, and del
  Coz}]{Mena2016}
Mena D, Monta{\~n}{\'e}s E, Quevedo JR, del Coz JJ (2017) A family of
  admissible heuristics for {A}* to perform inference in probabilistic
  classifier chains. Machine Learning pp 1--27

\bibitem[{Mikolov et~al.(2013)Mikolov, Sutskever, Chen, Corrado, and
  Dean}]{Mikolov_et_al_2013}
Mikolov T, Sutskever I, Chen K, Corrado GS, Dean J (2013) Distributed
  representations of words and phrases and their compositionality. In: Burges
  CJC, Bottou L, Welling M, Ghahramani Z, Weinberger KQ (eds) Advances in
  Neural Information Processing Systems 26, Curran Associates, Inc., pp
  3111--3119,
  \urlprefix\url{http://papers.nips.cc/paper/5021-distributed-representations-of-words-and-phrases-and-their-compositionality.pdf}

\bibitem[{Morin and Bengio(2005)}]{Morin2005HierS}
Morin F, Bengio Y (2005) Hierarchical probabilistic neural network language
  model. In: Proceedings of the Tenth International Workshop on Artificial
  Intelligence and Statistics, Society for Artificial Intelligence and
  Statistics, pp 246--252

\bibitem[{Naidan and Boytsov(2015)}]{Naidan_and_Boytsov_2015}
Naidan B, Boytsov L (2015) Non-metric space library manual. CoRR
  abs/1508.05470, \urlprefix\url{http://arxiv.org/abs/1508.05470},
  \eprint{1508.05470}

\bibitem[{Navarro(2002)}]{Navarro_2002}
Navarro G (2002) Searching in metric spaces by spatial approximation. The VLDB
  Journal 11(1):28--46, \doi{10.1007/s007780200060},
  \urlprefix\url{http://dx.doi.org/10.1007/s007780200060}

\bibitem[{Nguyen et~al.(2018)Nguyen, Destercke, Masson, and
  H{\"u}llermeier}]{Nguyen2018RelMCC}
Nguyen V, Destercke S, Masson M, H{\"u}llermeier E (2018) Reliable multi-class
  classification based on pairwise epistemic and aleatoric uncertainty. In:
  {IJCAI}, ijcai.org, pp 5089--5095

\bibitem[{Ofer(2019)}]{DBpedia}
Ofer D (2019) Dbpedia classes.
  \urlprefix\url{https://www.kaggle.com/danofer/dbpedia-classes/metadata}

\bibitem[{Oh(2017)}]{Oh2017TopKHC}
Oh S (2017) Top-k hierarchical classification. In: {AAAI}, {AAAI} Press, pp
  2450--2456

\bibitem[{Papadopoulos(2008)}]{papa_ic08}
Papadopoulos H (2008) Inductive conformal prediction: Theory and application to
  neural networks. Tools in Artificial Intelligence 18(2):315--330

\bibitem[{Partalas et~al.(2015)Partalas, Kosmopoulos, Baskiotis, Arti{\`e}res,
  Paliouras, Gaussier, Androutsopoulos, Amini, and
  Gallinari}]{PartalasKBAPGAA15}
Partalas I, Kosmopoulos A, Baskiotis N, Arti{\`e}res T, Paliouras G, Gaussier
  {\'E}, Androutsopoulos I, Amini M, Gallinari P (2015) {LSHTC:} {A} benchmark
  for large-scale text classification. CoRR abs/1503.08581

\bibitem[{Paszke et~al.(2017)Paszke, Gross, Chintala, Chanan, Yang, DeVito,
  Lin, Desmaison, Antiga, and Lerer}]{paszke2017automatic}
Paszke A, Gross S, Chintala S, Chanan G, Yang E, DeVito Z, Lin Z, Desmaison A,
  Antiga L, Lerer A (2017) Automatic differentiation in pytorch. In: NIPS-W

\bibitem[{Prabhu and Varma(2014)}]{Prabhu_Varma_2014}
Prabhu Y, Varma M (2014) Fastxml: A fast, accurate and stable tree-classifier
  for extreme multi-label learning. In: KDD

\bibitem[{Prabhu et~al.(2018)Prabhu, Kag, Harsola, Agrawal, and
  Varma}]{Prabhu_et_al_2018}
Prabhu Y, Kag A, Harsola S, Agrawal R, Varma M (2018) Parabel: Partitioned
  label trees for extreme classification with application to dynamic search
  advertising. In: Proceedings of the International World Wide Web Conference

\bibitem[{Rahimi and Recht(2008)}]{RahimiNIPS08}
Rahimi A, Recht B (2008) Random features for large-scale kernel machines.
  Advances in Neural Information Processing Systems 20 pp 1177--1184

\bibitem[{Ramaswamy et~al.(2015)Ramaswamy, Tewari, and
  Agarwal}]{Ramaswamy2015CAMCRO}
Ramaswamy HG, Tewari A, Agarwal S (2015) Consistent algorithms for multiclass
  classification with a reject option. CoRR abs/1505.04137

\bibitem[{Rangwala and Naik(2017)}]{Rangwala2017}
Rangwala H, Naik A (2017) Large scale hierarchical classification: foundations,
  algorithms and applications. In: The European Conference on Machine Learning
  and Principles and Practice of Knowledge Discovery in Databases

\bibitem[{RIKEN(2013)}]{Bacteria}
RIKEN (2013) Genomic-based 16s ribosomal rna database.
  \urlprefix\url{https://metasystems.riken.jp/grd/download.html}

\bibitem[{Senge et~al.(2014)Senge, B{\"o}sner, Dembczy{\'e}nski, Haasenritter,
  Hirsch, Donner-Banzhoff, and H{\"u}llermeier}]{senge14}
Senge R, B{\"o}sner S, Dembczy{\'e}nski K, Haasenritter J, Hirsch O,
  Donner-Banzhoff N, H{\"u}llermeier E (2014) Reliable classification: Learning
  classifiers that distinguish aleatoric and epistemic uncertainty. Information
  Sciences 255:16--29

\bibitem[{Shafer and Vovk(2008{\natexlab{a}})}]{Shafer2008}
Shafer G, Vovk V (2008{\natexlab{a}}) A tutorial on conformal prediction.
  Journal of Machine Learning Research 9:371--421

\bibitem[{Shafer and Vovk(2008{\natexlab{b}})}]{shaf_at08}
Shafer G, Vovk V (2008{\natexlab{b}}) A tutorial on conformal prediction.
  Journal of Machine Learning Research pp 371--421

\bibitem[{Shrivastava and Li(2014)}]{Shrivastava_Ping_2014}
Shrivastava A, Li P (2014) Asymmetric lsh (alsh) for sublinear time maximum
  inner product search (mips). In: Proceedings of the 27th International
  Conference on Neural Information Processing Systems - Volume 2, MIT Press,
  Cambridge, MA, USA, NIPS'14, pp 2321--2329

\bibitem[{Simonyan and Zisserman(2014)}]{Simonyan2014DCNN}
Simonyan K, Zisserman A (2014) Very deep convolutional networks for large-scale
  image recognition. arXiv preprint arXiv:14091556

\bibitem[{Syed(2016)}]{Syed2016}
Syed S (2016) Submodularity in machine learning. MLRG Summer School,
  https://www.stat.ubc.ca/~saif.syed/papers/mlrg\_submodularity.pdf

\bibitem[{Vondrak(2019)}]{Vondrak2019}
Vondrak J (2019) Optimization of submodular functions tutorial.
  Https://theory.stanford.edu/~jvondrak/data/submod-tutorial-1.pdf

\bibitem[{Vovk et~al.(2003)Vovk, Gammerman, and Shafer}]{vovk_al}
Vovk V, Gammerman A, Shafer G (2003) Algorithmic Learning in a Random World.
  Springer-Verlag

\bibitem[{Waegeman et~al.(2014)Waegeman, Dembczy{\'n}ski, Jachnik, Cheng, and
  H{\"u}llermeier}]{Waegeman2014}
Waegeman W, Dembczy{\'n}ski K, Jachnik A, Cheng W, H{\"u}llermeier E (2014) On
  the bayes-optimality of f-measure maximizers. Journal of Machine Learning
  Research pp 3333--3388

\bibitem[{Yagnik et~al.(2011)Yagnik, Strelow, Ross, and sung
  Lin}]{Yagnik_et_al_2011}
Yagnik J, Strelow D, Ross DA, sung Lin R (2011) The power of comparative
  reasoning. In: 2011 International Conference on Computer Vision, pp
  2431--2438

\bibitem[{Yang et~al.(2017{\natexlab{a}})Yang, Destercke, and
  Masson}]{Yang2017}
Yang G, Destercke S, Masson MH (2017{\natexlab{a}}) Cautious classification
  with nested dichotomies and imprecise probabilities. Soft Computing
  21:7447--7462

\bibitem[{Yang et~al.(2017{\natexlab{b}})Yang, Destercke, and
  Masson}]{Yang2017b}
Yang G, Destercke S, Masson MH (2017{\natexlab{b}}) The costs of indeterminacy:
  How to determine them? IEEE Transactions on Cybernetics 47:4316--4327

\bibitem[{Ye et~al.(2012)Ye, Chai, Lee, and Chieu}]{Ye2012}
Ye N, Chai K, Lee WS, Chieu HL (2012) Optimizing f-measures: a tale of two
  approaches. In: Proceedings of the International Conference on Machine
  Learning

\bibitem[{Zaffalon et~al.(2012)Zaffalon, Giorgio, and
  Mau{\'a}}]{Zaffalon2012EvaluatingCC}
Zaffalon M, Giorgio C, Mau{\'a} DD (2012) Evaluating credal classifiers by
  utility-discounted predictive accuracy. Int J Approx Reasoning 53:1282--1301

\bibitem[{Ziyin et~al.(2019)Ziyin, Wang, Liang, Salakhutdinov, Morency, and
  Ueda}]{ziyin2019deep}
Ziyin L, Wang Z, Liang PP, Salakhutdinov R, Morency LP, Ueda M (2019) Deep
  gamblers: Learning to abstain with portfolio theory. \eprint{1907.00208}

\end{thebibliography}

\newpage
\appendix

\section{Regret bounds for the utility functions}

In this part we present a short theoretical analysis that relates the Bayes optimal solution for the set-based utility functions 
to the solution obtained on the probabilities given by a trained model. 
The goal is to upper bound the regret of the set-based utility functions by the $L_1$ error of the class probability estimates.
The analysis is performed on the level of a single $\bx$. 

Let $\hat P(\bx)$ be the estimate of the true underlying distribution $P(\bx)$. 
Let $U^*(P, u)$ denote the optimal utility for $P$ obtained by the optimal solution $\sety^*$ (this solution does not have to be unique). 
Now, let $\sety$ denote the optimal solution with respect to $\hat P(\bx)$. 
We define the regret of $\sety$ as:
\begin{eqnarray*}
\mathrm{reg}_u(\sety) & = & U^*(P, u) - U(\sety, P, u) \\ 
& = & \sum_{c \in \mathcal{Y}} u(c,\sety^*) P(c\,|\,\bx) - \sum_{c \in \mathcal{Y}} u(c,\sety) P(c\,|\,\bx) \\
& = &  \sum_{c \in \mathcal{Y}} \left ( u(c,\sety^*) - u(c,\sety) \right ) P(c\,|\,\bx)
\end{eqnarray*}

We bound $\mathrm{reg}_u(\hat P(\bx))$ in terms of the $L_1$-estimation error, i.e.:
$$
\sum_{c \in \mathcal{Y}} | P(c\,|\,\bx) - \hat P(c\,|\,\bx) |
$$

Note that if $\sety^* = \sety$ the regret is 0. Otherwise, we need to have 
$$
U(\sety, \hat P, u) \ge U(\sety^*, \hat P, u)
$$ 
Thus, we can write 
\begin{eqnarray}
\mathrm{reg}_u(\sety) & \le & U^*(P, u) - U(\sety, P, u)  + U(\sety, \hat P, u) - U(\sety^*, \hat P, u) \nonumber \\
& = & \sum_{c \in \mathcal{Y}} \left ( u(c,\sety^*) - u(c,\sety) \right ) P(c\,|\,\bx)
+ \sum_{c \in \mathcal{Y}} \left ( u(c,\sety) - u(c,\sety^*) \right ) \hat P(c\,|\,\bx) \nonumber \\
& = & \sum_{c \in \mathcal{Y}}  u(c,\sety^*) \left ( P(c\,|\,\bx) - \hat P(c\,|\,\bx) \right )
+ \sum_{c \in \mathcal{Y}} u(c,\sety) \left (\hat P(c\,|\,\bx)  - P(c\,|\,\bx) \right )  \nonumber \\
& \le & \sum_{c \in \mathcal{Y}}  u(c,\sety^*) \left | P(c\,|\,\bx) - \hat P(c\,|\,\bx) \right |
+ \sum_{c \in \mathcal{Y}} u(c,\sety) \left | \hat P(c\,|\,\bx)  - P(c\,|\,\bx) \right |  \label{eqn:absolute_value} \\
& = & \sum_{c \in \mathcal{Y}} \left ( u(c,\sety^*) + u(c,\sety)  \right )\left | P(c\,|\,\bx) - \hat P(c\,|\,\bx) \right | \nonumber \\
& \le & 2 \sum_{c \in \mathcal{Y}} \left | P(c\,|\,\bx) - \hat P(c\,|\,\bx) \right | \label{eqn:bounded_utility} 
\end{eqnarray}
The inequality in (\ref{eqn:absolute_value}) follows from the properties of the absolute function, $a \le |a|$, 
while the one in (\ref{eqn:bounded_utility})  holds because the utility functions are from the bounded interval, $u(\cdot,\cdot) \in [0,1]$.
We clearly see that the regret is upper bounded by the quality of the estimated probability distribution.

\section{Generalized reject option utility and parameter bounds}
\label{app:parambound}

In this part we analyze which values $\alpha$ and $\beta$ can take so that the $g_{\alpha,\beta}$ family is lower bounded by precision. This family is visualized in Figure~\ref{fig:gab}. For a given $K$, the following inequality must hold $\forall s\in \{1,\ldots,K\}$, such that $g_{\alpha,\beta}(s)$ is lower bounded by precision:
$$g_{\alpha, \beta}(s) \geq g_P(s)\,,$$
with utilities:
$$ g_{\alpha, \beta}(s) = 1-\alpha\Big(\frac{s-1}{K-1}\Big)^\beta\,,\quad g_P(s) = \frac{1}{s}\,. $$
When looking at the boundary cases (i.e., $s=1, s=K$), we find that:
$$\alpha \leq \frac{K-1}{K}\,.$$
By fixing $\alpha=\frac{K-1}{K}$, the above inequality can be rewritten, $\forall s\in \{2,\ldots,K-1\}$, as:
\begin{eqnarray*}
&& 1-\Big(\frac{K-1}{K}\Big)\Big(\frac{s-1}{K-1}\Big)^{\beta} \geq \frac{1}{s} \\
&\Leftrightarrow& \Big(\frac{s-1}{K-1}\Big)^{\beta} \leq \frac{K}{s}\Big(\frac{s-1}{K-1}\Big) \\
&\Leftrightarrow& \beta \geq \log_{\frac{s-1}{K-1}}\frac{K}{s}+1 \\
&\Rightarrow&  \beta \geq \log_{\frac{1}{K-1}}\frac{K}{2}+1\\
\end{eqnarray*}
Note that in the limit, when $K\rightarrow\infty$, we obtain the following upper and lower bound for $\alpha$ and $\beta$, respectively:
$$ \lim_{K\rightarrow\infty} \frac{K-1}{K} = 1\,,\quad \lim_{K\rightarrow\infty} \log_{\frac{1}{K-1}}\frac{K}{2}+1 = 0\,. $$

\begin{figure}[ht]
\centering
\begin{tabular}{cc}
\hskip -0.2in
\includegraphics[scale=0.35]{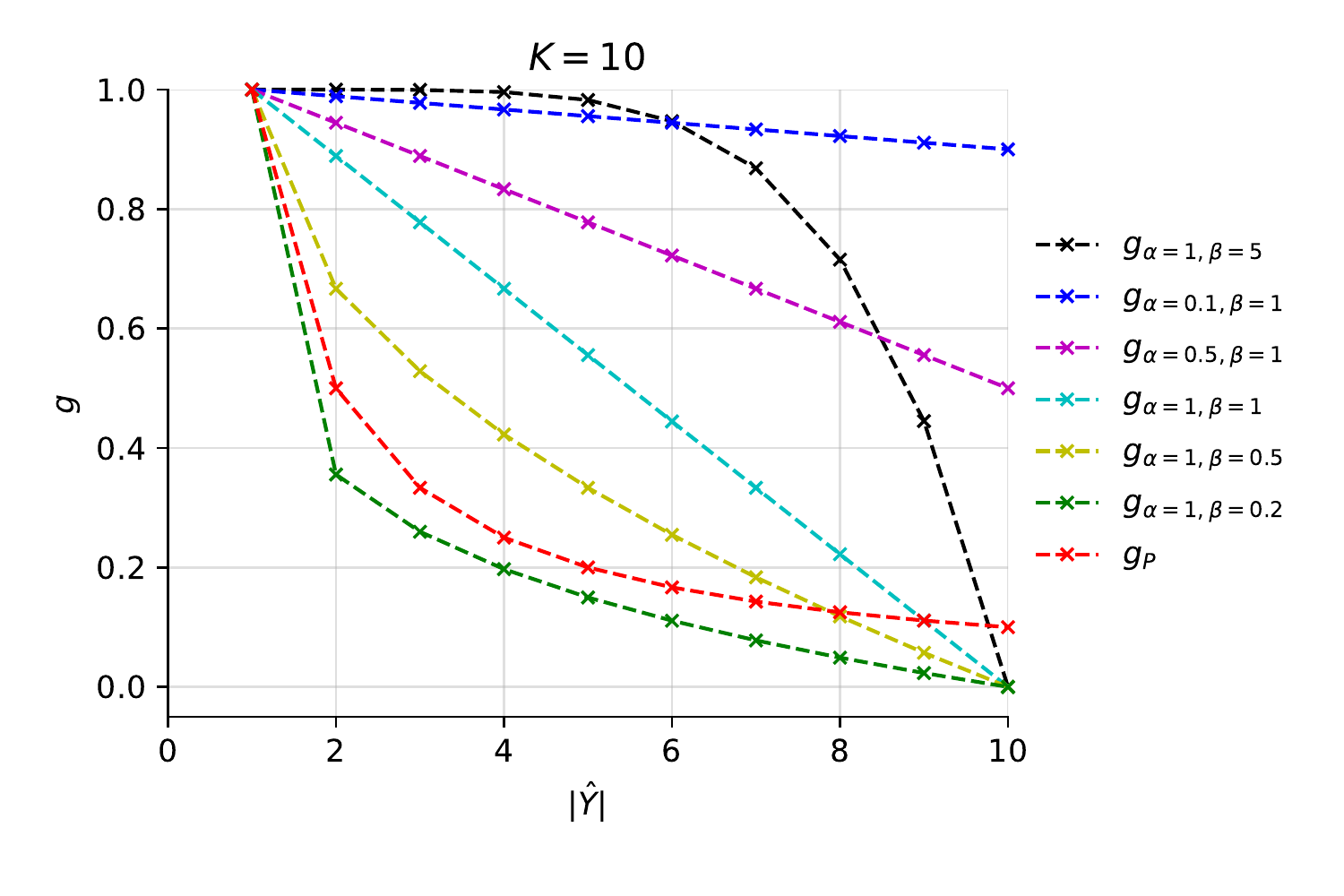} &
\includegraphics[scale=0.35]{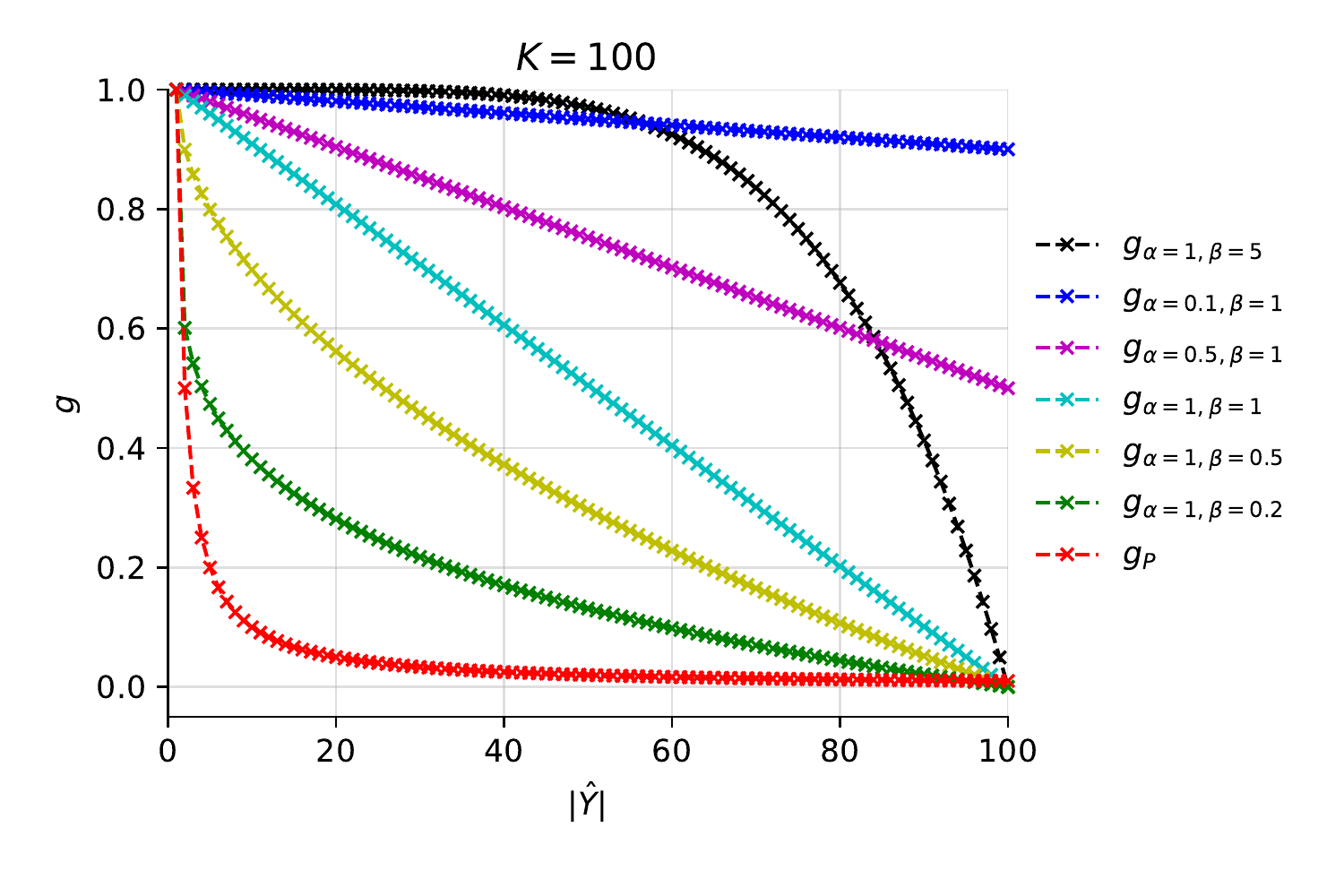}
\end{tabular}
\caption{A visualization of $g_{\alpha,\beta}$ in function of different values of $|\sety|$ and $K$.}
\label{fig:gab}
\end{figure}

\section{Experimental setup}
For all image datasets, except ALOI.BIN, we use hidden representations obtained by convolutional neural networks, whereas for the text datasets (bottom) tf-idf representations are used. The dimensionality of the representations are denoted by $D$. For the MNIST dataset we use a convolutional neural network with three consecutive convolutional, batchnorm and max-pool layers, followed by a fully connected dense layer with 32 hidden units. We use ReLU activation functions and optimize the categorical cross-entropy loss by means of Adam optimization with learning rate $\eta = 1e-3$. For the VOC 2006\footnote{The multi-label VOC datasets are transformed to multi-class by removing instances with more than one label.\label{fn:voc}}, VOC 2007\footref{fn:voc}, Caltech-101 and Caltech-256, the hidden representations are obtained by resizing images to 224x224 pixels and passing them through the convolutional part of an entire VGG16 architecture, including a max-pooling operation~\citep{Simonyan2014DCNN}. The weights are set to those obtained by training the network on ImageNet. For all convolutional neural networks, the number of epochs are set to 100 and early stopping is applied with a patience of five iterations. For ALOI.BIN, we use the ALOI dataset with random binning features, obtained by using the Laplacian kernel, as described in \citep{RahimiNIPS08}. Training is performed end-to-end on a GPU, by using the PyTorch library~\citep{paszke2017automatic} and infrastructure with the following specifications:
\begin{itemize}
    \item \textbf{CPU:} i7-6800K 3.4 GHz (3.8 GHz Turbo Boost) – 6 cores / 12 threads.
    \item \textbf{GPU:} 2x Nvidia GTX 1080 Ti 11GB + 1x Nvidia Tesla K40c 11GB.
    \item \textbf{RAM:} 64GB DDR4-2666.
\end{itemize}

For the bacteria dataset, tf-idf representations are calculated by using 3-, 4-, and 5-grams extracted from each 16S rRNA sequence in the dataset~\citep{FiannacaPRBRRGU18}. For the proteins dataset, we consider 3-grams in order to calculate the tf-idf representation for each protein sequence. To comply with literature, we concatenate the tf-idf representations with functional domain encoding vectors, which provide distinct functional and evolutional information about the protein sequence. For more information about the functional domain encodings, we refer the reader to~ \citep{LiWUXFLG18}. 

Finally, we use the learned hidden representations for the image datasets and calculate tf-idf representations for the text datasets to train the probabilistic models using a dual L2-regularized logistic regression model. For the DMOZ and LSHTC1 dataset we enforce sparsity by clipping all the learned weights less than a threshold $\eta = 0.1$ to zero~\citep{Babbar_at_el_2017}.
We implemented all SVBOP algorithms in C++ using Liblinear library~\citep{REF08a} and H-NSW implementation from NMSLIB~\citep{Naidan_and_Boytsov_2015}. All experiments were conducted on Intel Xeon E5-2697 v3 2.60GHz (14 cores) with 64GB RAM. We include detail information about selection of hyperparameters for all the models in the next section.

\section{Hyperparameters}
\begin{table}[t]
\caption{Values of hyperparameters used for SVBOP-Full, SVBOP-HSG, and SVBOP-HF algorithms for different datasets.}
\label{tab:hyperparams}
\begin{center}
\begingroup
\setlength{\tabcolsep}{3pt}
\begin{tabular}{l|rr|rrr|rrrr}
\toprule
 & \multicolumn{2}{c|}{Full} & \multicolumn{3}{c|}{HSG} & \multicolumn{4}{c}{HF} \\
\textbf{Dataset} & $C$ & $\epsilon_{l}$ & $M$ & $\textit{ef}_c$ & $k$ & $C$ & $\epsilon_{l}$ & $l$ & $\epsilon_{c}$ \\
\midrule 
\textbf{VOC 2006} & 100 & 0.1 & - & - & - & - & - & - & - \\
\textbf{VOC 2007} & 100 & 0.1 & - & - & - & - & - & - & - \\
\textbf{Caltech-101} & 100 & 0.1 & - & - & - & - & - & - & - \\
\textbf{Caltech-256} & 100 & 0.1 & - & - & - & - & - & - & - \\
\textbf{DBpedia} & $10^5$ & 0.1 & - & - & - & - & - & - & - \\
\textbf{ALOI.BIN} & 100 & 0.1 & 10 & 50 & 10 & 500 & 0.1 & 20 & 0.001 \\
\textbf{Bacteria} & $10^6$ & 0.1 & 50 & 200 & 100 & $10^6$ & 0.1 & 20 & 0.001 \\
\textbf{Proteins} & $10^6$ & 0.1 & 50 & 200 & 200 & $10^9$ & 0.1 & 20 & 0.001 \\
\textbf{Dmoz} & 1000 & 0.1 & 20 & 100 & 100 & 50 & 0.1 & 100 & 0.001 \\
\textbf{LSHTC1} & 1000 & 0.1 & 20 & 100 & 100 & 50 & 0.1 & 100 & 0.001 \\
\bottomrule%
\end{tabular}%
\endgroup
\end{center}
\end{table}
For Liblinear library, used for implementations of all SVBOP algorithms, we tuned two parameters: $C$~--~inverse of the regularization strength and $\epsilon_{l}$~--~ tolerance of termination criterion. For SVBOP-HSG and underlying H-NSW index method, we tuned four parameters: $M$~--~the maximum number of neighbors in the layers of H-NSW index, $\textit{ef}_c$~--~size of the dynamic candidate list during H-NSW index construction, $k$~--~initial size of the query to H-NSW index, $\textit{ef}_s$~--~size of the dynamic candidate list during H-NSW index query, was always set to the current value of $k$. For balanced 2-means tree building, we tuned two parameters: $l$~--~maximum number of leaves on the last level of a tree and $\epsilon_{c}$~--~§tolerance of termination criterion of the 2-means algorithm. We list all the hyperparameters we used to obtained all the results presented in Section~\ref{sec:res:uti} and Section~\ref{sec:res:compalgo} in Table~\ref{tab:hyperparams}

\end{document}